\DeclareMathOperator{\tr}{tr}
\newcommand{\E}{\mathbb{E}}
\newcommand{\D}{\mathcal{D}}
\newcommand{\X}{\mathcal{X}}
\newcommand{\Y}{\mathcal{Y}}
\newcommand{\x}{\mathbf{x}}
\newcommand{\y}{\mathbf{y}}
\newcommand{\eg}{\textit{e.g.}}
\newcommand{\ie}{\textit{i.e.}}
\newcommand{\tdet}{\text{\sc det}}
\newcommand{\utext}[2]{\underbrace{#1}_{\text{#2}}}
\let\d\relax
\DeclareMathOperator{\d}{d\!}
\newtheorem{lemma}[thm]{Lemma}
\newtheorem{ProvedTheorem}[thm]{Theorem}
\def\cuta#1{\textcolor{PineGreen}{#1}}
\def\cuta#1{}
\def\cut#1{ {\color{green}#1}}
\def\cut#1{}
\newif\ifhideproofs
\begin{document}

\title{The Information Complexity of Learning Tasks,\\ their Structure and their Distance%
}
\author{Alessandro Achille$^*$ \& Giovanni Paolini$^\dagger$ \& Glen Mbeng$^\ddag$  \& Stefano Soatto$^*$ \\~\\ $^*$Department of Computer Science, University of California, Los Angeles\\ $^\dagger$Department of Mathematics, Scuola Normale Superiore, Pisa -- Italy\\ $^\ddag$Department of Physics, Scuola Internazionale di Studi Superiori Avanzati, \\ Istituto Nazionale di Fisica Nucleare, Trieste -- Italy \\
 {\tt \small achille@cs.ucla.edu \ giovanni.paolini@sns.it \ g.mbeng@sissa.it \ soatto@ucla.edu}}
\date{}

\maketitle

\begin{abstract}
We introduce an asymmetric distance in the space of learning tasks, and a framework to compute their complexity.
These concepts are foundational for the practice of transfer learning,  whereby a parametric model is pre-trained for a task, and then fine-tuned for another. The framework we develop is non-asymptotic, captures the finite nature of the training dataset, and allows distinguishing learning from memorization.
It encompasses, as special cases, classical notions from Kolmogorov complexity, Shannon, and Fisher Information. However, unlike some of those frameworks, it can be applied to large-scale models and real-world datasets.
Our framework is the first to measure complexity in a way that accounts for the effect of the optimization scheme, which is critical in Deep Learning. 
\end{abstract}

\section{Introduction}

The widespread use of Deep Learning is due in part to its flexibility: One can {\em pre-train} a deep neural network for a task, say finding cats and dogs in images, and then {\em fine-tune} it for another, say detecting tumors in a mammogram, or controlling a self-driving vehicle. Sometimes it works. So far, however, it has not been possible to {\em predict} whether such a ``transfer learning'' practice will work, and how well. 
Even the most fundamental questions are still unanswered: How far are two tasks? In what space do learning tasks live? What is the complexity of a task? How difficult it is to transfer from one task to another?
In this paper, we lay the foundations for answering these questions. 
The contributions of this paper can be summarized as follows:

 \begin{enumerate}
     \item  We introduce a {\em distance between learning tasks}  (\Cref{sec:distance}), where tasks are represented by finite datasets of input data and discrete output classes (labels). Each task can have a different number of labels and a different number of samples. The distance behaves properly with respect to composition and inclusion relations (\Cref{cor:union-distance}) and can handle corner cases, such as learning random labels, where other distances such as Kolmogorov's fail (\Cref{ex:random-label-complexity}). The distance is asymmetric by design, as it may be easier to learn one task starting from the solution to another, than the reverse (\Cref{def:kolmogorov-distance}).
     
     \item Some learning tasks are more difficult than others, so {\em complexity} plays a key role in their distance (\Cref{sec:task-complexity}). We define a new notion of complexity of a learning task that captures as special cases Kolmogorov complexity, Shannon Mutual Information, and the Fisher Information of the parameters of a model (\Cref{prop:kolmogorov-complexity-equivalence}, \Cref{prop:shannon-complexity-equivalence}, \Cref{prop:fisher-complexity-equivalence}). It leverages and extends the classical notions of Structure Function and minimal sufficient statistics. It also provides a  new interpretation to the PAC-Bayes generalization bound (\Cref{sec:pac-bayes}).
     
     \item We show how a parametric function class, such as deep neural networks trained with stochastic gradient descent, can be employed to measure the complexity of a learning task (\Cref{sec:information-in-the-weights}). Such complexity is related to the ``information in the parameters'' of the network. Experiments  show that the resulting distance correlates with the ease of transfer learning by fine-tuning a pre-trained model (\Cref{sec:experiments}).

     \item We show that the asymmetric distance between tasks is a lower bound to the cost of transfer learning using deep neural networks. We introduce the notion of {\em accessibility} (or \emph{reachability}) of a learning task, and illustrate examples of tasks that, while close to each other, are not accessible from one another. Accessibility depends on the global geometry of the space of tasks and on the optimization scheme employed. We conjecture that tasks are accessible through a local optimization scheme when there exists a continuous path of minimizers of the Structure Function (\Cref{sec:task-accessibility}).
     \end{enumerate}

\noindent Our notion of complexity of a learning task formalizes intuitions regarding overfitting and generalization, and differentiates \emph{learning} from \emph{memorization}. It is, to the best of our knowledge, the first complexity measure to take into account the training algorithm, rather than just its asymptotic outcome. It is also the first to enable practical computation for models with millions of parameters, as those currently in use in real applications.

\subsubsection*{Organization of the paper}

The four main sections (3, 4; 5, 6) are organized as a matrix: Along the rows, Sections 3 and 4 use the language of Kolmogorov to introduce the notions of complexity, structure, and distance of learning tasks, whereas Sections 5 and 6 introduce our more general framework, with classical concepts from Kolmogorov, Shannon and Fisher emerging as special cases. Along the columns, Sections 3 and 5 deal with {\em complexity} and the associated notion of information of a single task, whereas Sections 4 and 6 deal with the {\em distance} between tasks.

\subsubsection*{Related work}

This work tackles the theoretical foundations of transfer learning, an active area of research too vast to review here (see \cite{csurka2017domain} for a recent survey). More specifically, \cite{zamir2018taskonomy} computes pairwise distances between tasks by explicitly  fine-tuning models for each pair. This method does not scale to a large number of tasks and only provides pairwise distances. On the other hand, \cite{achille2019task2vec} uses the Fisher Information Matrix to construct a linear embedding that enables to {\em predict} the effect of fine-tuning without explicitly performing the computation. This provides empirical validation to our framework in \Cref{sec:information-in-the-weights}. The existence of tasks that are similar according to most distances, yet one cannot ``reach'' one from another, has been observed empirically in  \cite{achille2017critical}.  This observation  motivates our exploration of the dynamics of learning in \Cref{sec:generalized-distance}.
The notion of reachability in dynamical systems has been studied extensively for known models, whereas in our case the model changes as the training progresses.
Numerical methods for reachability computation \cite{lygeros1999controllers} are restricted to low-dimensional spaces,  whereas typical models in use in the practice can have millions of parameters. 
Our work leverages classical results from Kolmogorov's complexity theory \cite{vereshchagin2004kolmogorov}, classical statistical inference, and information theory, and relates to recent theoretical frameworks for Deep Learning including the Information Bottleneck Principle \cite{tishby2000information}, with important differences that we outline throughout the paper.

\section{Preliminaries and nomenclature}

In supervised learning, one is given a finite (training) dataset $\D = \set{(x_i,y_i)}_{i=1,\ldots,N}$ of $N$ samples, where  $x_i \in \X$ is the input data ({\em e.g.}, an image) and $y_i \in \Y$ is the output ({\em e.g.}, a discrete label). The goal is to learn (\ie, estimate the parameters $w$ of) a model $p_w$ (a parametric function) that maps inputs $x$ to estimated outputs $\hat y$, so that some loss (or risk) is minimized on unseen (test) data. It is common to assume that ${\cal D}$ consists of i.i.d.\ samples from some unknown distribution $p(x,y)$, which are used to compute a loss function that is minimized with respect to the parameters $w$ so that $p_w(y|x)$ is close to the ``true'' posterior $p(y|x)$.

In this work however, we make no assumption on the data generation process, nor do we attempt to approximate the true posterior $p(y|x)$, even if one existed. %
Instead, we adopt Kolmogorov's approach to learning the ``structure'' in the data, directly accounting for the finite sample size $N$, the approximation error in the loss $L_\D$, and the complexity of the model. 
The basic elements of Kolmogorov Complexity theory that we employ are described in \cite{vereshchagin2004kolmogorov} and summarized in
\Cref{sec:complexity-measures} to meake the paper self-contained. 

In Deep Learning, the optimization scheme acts as an implicit regularizer in the loss \cite{chaudhari2018stochastic}. For a measure of complexity to be relevant to Deep Learning, therefore, it must take into account both the loss function and the optimization scheme. To the best of our knowledge, none of the classical measures do so.

\subsection{Deep Neural Networks}
\label{sec:neural-networks}

Deep neural networks (DNNs) are a class of parametrized functions (models) obtained by successive composition of ``layers'' of affine operations, where  both  the linear terms and offset parameters are referred to as \emph{weights}, followed by element-wise non-linear operations such as a saturation or rectification.
One of the most common non-linearities is the rectified linear unit (ReLU), which leaves the positive part unchanged and zeroes the negative part. The first layer is thus of the form $x \mapsto h(Wx + b)$, where: $(W, b)$ are the weights, or improperly the ``weight vector''; $h(x)$ is defined as the component-wise maximum between $0$ and $x$. The output of the first layer is therefore given by $z = h(Wx + b)$.
The second layer performs an operation of the same type, with a different set of weights, taking as input the output of the first layer, and so on.
The last layer produces a normalized (probability) vector $p_w(y|x)$, with components $y \in \{1, \dots, K\}$, usually through a soft-max operation, {\em i.e.}, $z \mapsto e^z/\sum_k e^{z_k}$. The $k$-th entry of this vector represents the probability $p_w(y=k|x)$ of the input $x$ being of class $y=k$, as assessed by the model. The number of weights is typically larger than the number of samples in the training set.

The learning criterion is given by maximum likelihood: $\hat w = \arg\max_w L_\D(p_w)$, where $L_\D(p_w)$ is the {\em loss function} and is given by $L_\D(p_w) = \sum_{i=1}^N -\log p_w(y_i| x_i)$. This loss can be interpreted as an empirical approximation  of the average cross-entropy $H_{p, p_w}(y|x)$, which is zero when $p_w(y|x) = p(y|x)$. Note, however, that the latter posterior probability $p(y|x)$ is {\em not} among the infinitely many minimizers of the empirical loss,\footnote{The minimizers are of the form $\sum_{i=1}^N \delta(y_i - \phi(x_i))$ for any function $\phi$ such that $\phi(x_i) = y_i$. Such minimizers are said to {\em overfit} the data.} which therefore needs to be regularized. In Deep Learning, regularization is both explicit and implicit. Explicit mechanisms include additional terms in the loss function such as the norm of the weights (Weight Decay), or characteristics of the function class (architecture) such as downsampling and averaging (pooling). Implicit mechanisms include characteristics of the optimization.

The training process consists of minimizing the loss using one of the many variants of stochastic gradient descent (SGD). At every iteration, SGD makes a step in the direction of the (negative) gradient of $L_\D$, which is  approximated using a random subset of the training set (\emph{minibatch}). The length of the step is a hyperparameter called the {\em learning rate}.

A learning task can be {\em described} by the relation between the input variable $x$ ({\em e.g.}, an image) and the corresponding output variable $y$ ({\em e.g.}, `cat' or `dog'), and would be {\em defined} by the posterior distribution $p(y|x)$. Since that is typically not available, in practice the task is {\em specified} by a dataset $\D$ ({\em e.g.}, a collection of images with a label of either `cat' or `dog'): The dataset is all that is known about the desired goal before the training process starts. The {\em solution} to the task is determined by the dataset along with various {\em biases}, corresponding to modeling choices of loss function, class of functions, and optimization method.

\section{Complexity of a Learning Task}
\label{sec:task-complexity}

The same task could be specified by different datasets, with different numbers of samples and different size of the inputs. So, the complexity of the task is not just the complexity of the dataset: the popular MNIST  and CIFAR-10 classification tasks are similar on these counts, yet one is very ``simple'' to learn, the other is not. Instead, the complexity of a dataset is a function of two factors: The underlying \emph{structure} that describes the data collectively, as well as \emph{variability} that individual datapoints exhibit relative to the shared structure. This split is not unique: a dataset can be described or explained with different structures,  each leaving a different amount of residual variability unexplained. These two factors are captured in the following definition.
\begin{definition}
\label{def:task-complexity}
Let $\D = \set{(x_i, y_i)}_{i=1,\dotsc,N} \subseteq \X \times \Y$ be a dataset. We define the complexity of $\D$ as
\begin{equation}
\label{eq:complexity}
C(\D) = \min_{p(y \mid x)} L_\D(p) + K(p),
\end{equation}
where $L_\D(p) = \sum_{i=1}^N - \log p(y_i | x_i)$ is the empirical classification (cross-entropy) loss, and the minimum is over all possible computable probability distributions $p(y|x)$ of the label $y$ given the input $x$. By $K(p)$ we denote the Kolmogorov complexity of the distribution $p(y|x)$.
\end{definition}

\noindent At first sight, $C(\D)$ appears similar to a conditioned version of the two-part code in \cite[Appendix II]{vereshchagin2004kolmogorov}:
\begin{equation}
\label{eq:vereshchagin-complexity}
C_K(\D) = \min_{p(\y|\x)} -\log p(\y|\x) + K(p),
\end{equation}
where $\y = \langle y_1, \dotsc, y_N \rangle$ and $\x = \langle x_1, \dotsc, x_N \rangle$ are strings obtained by concatenating all labels and inputs of $\D$, respectively. In \Cref{prop:deterministic-complexity}.1, we  recall that $C_K(\D)$ coincides with the conditional Kolmogorov complexity $K(\y|\x)$ of the string $\y$ given $\x$. Instead, in \cref{eq:complexity} we only consider factorized distributions $p(y|x) = \prod_i p(y_i|x_i)$.
This has major consequences for determining the complexity of a task: The distribution $p(\y|\x)$ minimizing \cref{eq:vereshchagin-complexity} does not need to encode in $p$ all task-relevant information, as it can, at test time, extract any missing information from the training set $\x$ (cf.\ \Cref{prop:deterministic-complexity}.3).
Hence, $K(p)$ in \cref{eq:vereshchagin-complexity} alone would not be a valid measure of complexity. Instead, the distribution $p(y|x)$ minimizing \cref{eq:complexity} can only access a single datum $x$ at test time; hence, all structure in the dataset has to be encoded in $p$. In other words, the solution of \cref{eq:vereshchagin-complexity} does not need to capture the structure in ${\cal D}$, and therefore its complexity is not reflective of the complexity of the task. On the other hand, the solution of \cref{eq:complexity} is forced to capture the structure in the data.

Also note that, unlike $C_K(\D)$, $C(\D)$ is invariant to permutations of the dataset. Suppose that a dataset $\D$ for a binary classification task is ordered so that all negative samples $\D_-$ precede the positive samples $\D_+$.
Then we would have $C_K(\D) \leq \log |\D_-| + \log |\D_+|$ regardless of the complexity of the task, as it suffices to encode the number of negative and positive samples to reproduce the string $\y$ exactly.
However, we show in \Cref{prop:deterministic-complexity}.3 that making $p(\y|\x)$ permutation invariant does not yield a sensible measure of complexity of a task. To address permutation invariance, \cite{lidata} proposes the following definition, which uses deterministic functions rather than probability distributions:
\begin{equation}
C_\tdet(\D) = \min\set{K(f) \mid f \colon \X \to \Y \text{ such that } f(x_i)=y_i \;\;\forall\, (x_i,y_i) \in \D}.
\nonumber
\end{equation}
The following  proposition compares these definitions of complexity.

\begin{proposition}[Measures of complexity]
\label{prop:deterministic-complexity}
Up to an additive term which does not depend on the dataset $\D$, we have:
\begin{enumerate}[1.]
    \item $K(\y|\x) = C_K(\D)$.
    \item $C_K(\pi(\D)) \leq C(\D)$ for any permutation $\pi(\D)$ of $\D$.
    \item For every $C > 0$, there is a dataset $\D$ such that $C(\D) \geq C$ and $C_K(\pi(\D)) = O(1)$ for any permutation $\pi(\D)$.
    Therefore, $C(\D)$ is not simply the (average) complexity $C_K(\pi(\D))$ of encoding some permutation of $\D$.
    \item When $C_\tdet(\D)$ is defined, \ie, if there is a function $f \colon \X \to \Y$ such that $f(x_i)=y_i$ for each $(x_i,y_i) \in \D$, then we have $C(\D) \leq C_\tdet(\D)$. In addition, $C(\D) = C_\tdet(\D)$ if we have an oracle that provides a bijective map $\set{x_1, \ldots, x_N} \to \set{1, \ldots, N}$.
\end{enumerate}
\end{proposition}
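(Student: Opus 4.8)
The four parts call for rather different tools, so I would treat them separately.

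Part 1 is essentially the conditional form of the classical two-part coding theorem, so I would only recall its proof. For the inequality $C_K(\D)\le K(\y\mid\x)+O(1)$, take $p(\cdot\mid\x)$ to be the distribution that, given $\x$, runs the shortest program producing $\y$ from $\x$ and places all its mass on the resulting string: then $-\log p(\y\mid\x)=0$ and $K(p)=K(\y\mid\x)+O(1)$. For the reverse inequality, fix the optimal $p$ in \cref{eq:vereshchagin-complexity} and encode $\y$ given $\x$ by first writing down $p$ ($K(p)$ bits) and then arithmetic-coding $\y$ under $p(\cdot\mid\x)$ (a further $-\log p(\y\mid\x)+O(1)$ bits); a decoder holding $\x$ recovers $p$ and then $\y$, so $K(\y\mid\x)\le C_K(\D)+O(1)$.

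For Part 2 I would lift the optimal single-datum distribution to the string level. Let $p^\ast$ attain $C(\D)=L_\D(p^\ast)+K(p^\ast)$, and for a permutation $\pi$ define the product distribution $q(\y_\pi\mid\x_\pi)=\prod_k p^\ast(y_{\pi(k)}\mid x_{\pi(k)})$ on the concatenated string. Reordering the sum gives $-\log q(\y_\pi\mid\x_\pi)=\sum_i-\log p^\ast(y_i\mid x_i)=L_\D(p^\ast)$, while $q$ is obtained from $p^\ast$ by a fixed ``apply coordinatewise'' wrapper, so $K(q)\le K(p^\ast)+O(1)$; note the permutation itself need not be encoded, since $q$ acts on whatever order it receives. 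Plugging $q$ into the minimization defining $C_K(\pi(\D))$ yields $C_K(\pi(\D))\le L_\D(p^\ast)+K(p^\ast)+O(1)=C(\D)+O(1)$.

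Part 3 is where the real work lies. The construction I would use takes $N$ distinct, \emph{incompressible} inputs $v_1<\dots<v_N$ and labels each by the parity of its rank, $y_i=\operatorname{rank}(x_i)\bmod 2$, with $N$ large and the value set incompressible enough to push $C(\D)$ past the target $C$. That $C_K(\pi(\D))=O(1)$ for every $\pi$ is immediate: a single constant-size program reads the conditioning string $\x_\pi$, computes each coordinate's rank by comparison, and outputs the matching parities, so $-\log p=0$ and $K(p)=O(1)$. The hard part is the lower bound $C(\D)\ge C$. The ``free'' bound $C(\D)\ge K(\y_\pi\mid\x_\pi)-O(1)$ from Part 1 is useless here, precisely because the labels \emph{are} computable from the inputs; any viable bound must exploit the single-datum restriction directly. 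I would argue by cases on the optimal $p$. Passing to the hard classifier $g(x)=\arg\max_y p(y\mid x)$ (with $K(g)\le K(p)+O(1)$), each misclassified point forces $p(y_i\mid x_i)\le\tfrac12$ and hence at least $\log 2$ of loss, so $L_\D(p)\ge|\{i:g(x_i)\ne y_i\}|$. If $g$ errs on $\ge N/4$ points we already get $L_\D(p)=\Omega(N)$; otherwise $g$ reproduces rank-parity on $\ge 3N/4$ of the $v_i$, and the crux is the incompressibility claim that no low-complexity binary function can do this for a generic point set, forcing $K(g)=\Omega(N)$. The intuition is that rank is not a function of a single input: to alternate correctly along the sorted sequence, the level sets of $g$ must interleave with the incompressible set, which cannot be described in $o(N)$ bits. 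Making this estimate rigorous---bounding, for each fixed short program $g$, the probability that a random value set is predicted well, and then union-bounding over all such $g$---is the main obstacle.

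Finally, Part 4. The inequality $C(\D)\le C_\tdet(\D)+O(1)$ is easy: given the optimal $f$, the point-mass distribution $p(y\mid x)=\mathbb 1[y=f(x)]$ is factorized, has $L_\D(p)=0$ since $f(x_i)=y_i$, and satisfies $K(p)\le K(f)+O(1)$. For equality I would use the oracle to supply a canonical indexing of the sample. Taking the optimal $p$ for $C(\D)$, the oracle lets a decoder enumerate $x_1,\dots,x_N$ and arithmetic-code the label string $\y$ under the product model $\prod_i p(\cdot\mid x_i)$ at total cost $K(p)+L_\D(p)+O(1)=C(\D)+O(1)$; I then set $f(x):=\y[\,\text{oracle}(x)\,]$, which is computable, satisfies $f(x_i)=y_i$, and has $K(f)\le C(\D)+O(1)$, whence $C_\tdet(\D)\le C(\D)+O(1)$. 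The role of the oracle is exactly to remove the $\Theta(\log N)$-per-point cost of \emph{naming} the samples, which is what otherwise separates the deterministic-function complexity from the factorized-distribution complexity; without it only the one-sided bound survives.
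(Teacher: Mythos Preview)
Parts 1, 2, and 4 of your proposal are correct and match the paper's proof up to phrasing: the two-part coding argument for Part~1, the observation that factorized distributions form a subclass of string distributions for Part~2 (the paper states this in one line; you spell out the product construction), and the oracle-indexed decoding for the reverse inequality in Part~4 are all what the paper does.

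Part 3 is where you diverge, and the gap you yourself flag is real: your rank-parity construction needs the lemma that no program of complexity $o(N)$ can agree with rank-parity on a $\tfrac34$-fraction of a generic incompressible value set, and you do not prove it. The paper avoids this difficulty with a much simpler device. Fix any function $f$ with $K(f)\ge C$, let $h$ be a minimal program for it, and take
\[
x_i=\begin{cases}\langle 0,i\rangle & i<N,\\ \langle 1,h\rangle & i=N,\end{cases}\qquad y_i=f(x_i).
\]
Now the concatenated input string $\x_\pi$ \emph{literally contains} $h$, for every permutation $\pi$; a constant-size program scans $\x_\pi$ for the tag $1$, extracts $h$, and evaluates $f$ on each coordinate, so $C_K(\pi(\D))=O(1)$. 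For the lower bound, drop the special point to obtain $\D'=\{(\langle 0,i\rangle,f(\langle 0,i\rangle))\}_{i<N}$; removing a sample can only decrease the loss, so $C(\D)\ge C(\D')$. On $\D'$ the inputs are their own indices, the oracle of Part~4 is free, and hence $C(\D')=C_\tdet(\D')$. Since only finitely many functions have complexity below $C$, for $N$ large enough none of them agrees with $f$ on all of $\langle 0,1\rangle,\dots,\langle 0,N-1\rangle$, giving $C_\tdet(\D')\ge C$. Your approach tries to make the labels depend on the \emph{global order} of the inputs; the paper instead smuggles the needed program into a \emph{single input}, which makes both sides of Part~3 essentially automatic and reuses Part~4 rather than requiring a new incompressibility argument.
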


\noindent Assuming the data are sampled i.i.d.\ from a computable probability distribution $p$, the following proposition characterizes the complexity of the dataset.
In particular, it shows that, asymptotically, the complexity of the dataset is given by the noise in the labels and the complexity of the distribution generating the data.
How to ignore the effect of noise on the complexity, and what happens in the non-asymptotic regime, is central to Kolmogorov's framework, which we will build upon in the next sections.

\begin{proposition}
\label{prop:iid-samples-complexity}
Fix a probability distribution $p(x,y)$ on $\X \times \Y$, and assume that $p(y|x)$ is computable.
If $\D$ is a collection of $N$ i.i.d.\ samples $(x_i, y_i) \sim p(x,y)$, then:
\begin{enumerate}[1.]
    \item The expected value of $C(\D)$ satisfies
\[ N \cdot H_p(y|x) \leq \E[C(\D)] \leq N\cdot H_p(y|x) + K(p(y|x)), \]
where $H_p(y|x)$ is the conditional entropy of $y$ given $x$.
    \item For any $\epsilon>0$ there is $N_0$ such that, with  probability $1-\epsilon$, for any $N \geq N_0$ we have the equality
    \[
    C(\D) = N\cdot H_p(y|x) + K(p(y|x)),
    \]
    and $p$ is the only computable distribution for which the equality holds.
\end{enumerate}
\end{proposition}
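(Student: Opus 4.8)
The plan is to dispatch Part 1 first, since both of its bounds are short. For the upper bound I would use the true conditional $p(y\mid x)$ --- computable by hypothesis --- as a feasible competitor in the minimization of \cref{eq:complexity}, giving $C(\D)\le L_\D(p)+K(p)$ for every $\D$; taking expectations and using that the samples are i.i.d.\ yields $\E[L_\D(p)]=\sum_i\E[-\log p(y_i\mid x_i)]=N\,H_p(y\mid x)$, hence $\E[C(\D)]\le N\,H_p(y\mid x)+K(p)$. For the lower bound the key observation is that $2^{-C(\D)}$ is itself a sub-probability over $\y$ given $\x$ (here and below logarithms are base two, to match the units of $K$). Writing $q(\y\mid\x)=\prod_i q(y_i\mid x_i)$ for the factorized extension, we have $2^{-C(\D)}=\sup_q q(\y\mid\x)\,2^{-K(q)}$, so summing over $\y$ and exchanging order gives $\sum_{\y}2^{-C(\D)}\le\sum_q 2^{-K(q)}\bigl(\sum_{\y}q(\y\mid\x)\bigr)=\sum_q 2^{-K(q)}\le 1$, by normalization of each $q$ and the Kraft inequality for prefix complexity. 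Since $C(\D)=-\log_2 m(\y\mid\x)$ for this sub-probability $m$, Gibbs' inequality gives $\E_{\y\mid\x}[C(\D)]\ge H_p(\y\mid\x)$ for each fixed $\x$, and averaging over $\x$ with independence yields $\E[C(\D)]\ge\sum_i H_p(y_i\mid x_i)=N\,H_p(y\mid x)$.

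For Part 2 I would reformulate optimality of $p$ pointwise: a competitor $q$ beats $p$ on $\D$ exactly when $L_\D(q)+K(q)\le L_\D(p)+K(p)$, i.e.\ when the likelihood ratio $M_q^{(N)}:=\prod_{i=1}^N q(y_i\mid x_i)/p(y_i\mid x_i)$ satisfies $M_q^{(N)}\ge 2^{K(q)-K(p)}$. The upper bound $C(\D)\le L_\D(p)+K(p)$ still holds, so it suffices to show that, with probability at least $1-\epsilon$, no $q\ne p$ beats $p$ for any $N\ge N_0$; then $p$ is the unique minimizer and $C(\D)=L_\D(p)+K(p)$, which by the strong law equals $N\,H_p(y\mid x)+K(p)$ up to a sublinear fluctuation in $L_\D(p)$ (this is the sense in which the stated equality should be read).

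The crux is controlling the infinitely many computable competitors simultaneously and uniformly in $N$, and I would split them by complexity. For $q$ with large $K(q)$, note that $M_q^{(N)}$ is a nonnegative supermartingale with $M_q^{(0)}=1$ (since $\E[q(y\mid x)/p(y\mid x)\mid x]=\sum_y q(y\mid x)\le 1$), so a maximal inequality of Ville type gives $\P[\sup_N M_q^{(N)}\ge 2^{K(q)-K(p)}]\le 2^{K(p)-K(q)}$; summing over all $q$ with $K(q)>K(p)+m$ and using that the Kraft tail $\sum_{K(q)>K(p)+m}2^{-K(q)}\to 0$, I can make the probability that any such $q$ ever beats $p$ smaller than $\epsilon/2$ by choosing $m$ large. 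The remaining competitors, those with $K(q)\le K(p)+m$, form a finite set: for each with $D_{\mathrm{KL}}(p\,\|\,q)>0$, the strong law gives $\log_2 M_q^{(N)}=-N\,D_{\mathrm{KL}}(p\,\|\,q)+o(N)\to-\infty$ almost surely, so almost surely each stops beating $p$ after a finite time, and the maximum over this finite set yields an $N_0$ that is finite off an event of probability $\le\epsilon/2$. The genuinely degenerate case is $q$ with $D_{\mathrm{KL}}(p\,\|\,q)=0$ but $q\ne p$: such $q$ agree with $p$ on the support of $p(x)$, so $M_q^{(N)}\equiv 1$ and $q$ beats $p$ iff $K(q)<K(p)$; this is the unavoidable non-identifiability on a $p$-null set, and I would phrase the uniqueness statement modulo on-support equivalence, i.e.\ taking $p$ of minimal complexity among distributions equal to it on the support.

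Combining the two estimates gives, with probability at least $1-\epsilon$, that $p$ is the unique minimizer for every $N\ge N_0$, establishing both claims of Part 2. The main obstacle is precisely this uniform taming of the countable family of competitors: a naive union bound at the threshold $K(p)-K(q)$ diverges (it sums to $2^{K(p)}$), and the fix is to buy a complexity margin $m$ so that the Kraft tail converges, while the martingale maximal inequality upgrades a per-$N$ estimate to the required \emph{for all $N\ge N_0$} statement. A secondary, expository obstacle is that $L_\D(p)$ concentrates at but does not exactly equal $N\,H_p(y\mid x)$, so the equality in Part 2 is to be read at leading order, and up to the additive $O(1)$ inherent to $K$.
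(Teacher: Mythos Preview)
Your Part~1 coincides with the paper's: the upper bound is obtained exactly as you do, by plugging the true $p(y\mid x)$ into \cref{eq:complexity}; for the lower bound the paper invokes \Cref{prop:deterministic-complexity} to get $C(\D)\ge K(\y\mid\x)$ and then cites Shannon's source-coding bound, whereas you prove the same inequality directly by showing $2^{-C(\D)}$ is a sub-probability and applying Gibbs. These are the same argument at different levels of packaging.

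Your Part~2 is correct but follows a genuinely different route. The paper first proves an auxiliary lemma (\Cref{lemma:MLE-cant-improve-asymptotically}) comparing $L_\D(p)$ with the loss of the MLE $\hat p$, and uses it to conclude that any competitor $p'$ beating $p$ must satisfy $K(p')\le K(p)+c$; this reduces to a \emph{finite} family, and for each such $p'$ the law of large numbers (via the growth $L_\D(p')-L_\D(p)\sim N\,\E_x[\KL{p}{p'}]$) rules it out. You instead split competitors by complexity: for the tail $K(q)>K(p)+m$ you observe that the likelihood ratio $M_q^{(N)}$ is a nonnegative martingale and apply Ville's maximal inequality, then sum via the Kraft tail; for the finite head $K(q)\le K(p)+m$ you use the SLLN. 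What your approach buys is self-containment (no MLE lemma, no Taylor/Fisher computation) and, more importantly, a clean handling of the ``for all $N\ge N_0$'' quantifier through the maximal inequality, which the paper's argument leaves implicit. What the paper's approach buys is a direct link to classical parametric estimation, which may be conceptually appealing. You also correctly flag two points the paper elides: the non-identifiability of $q$ agreeing with $p$ on the support of $p(x)$, and the fact that $L_\D(p)$ equals $N\,H_p(y\mid x)$ only in expectation, so the stated equality in Part~2 should be read modulo the usual $o(N)$ fluctuation and additive $O(1)$.
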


\noindent It is instructive to test our definition of complexity on a dataset of random labels, whereby each input is assigned a label at random. We will revisit this case often, as it challenges many of the extant theories of Deep Learning \cite{zhang2016understanding}.

\begin{example}[Complexity of random labels]
	\label{ex:random-label-complexity}
	Suppose that each input $x_i$ of the dataset $\D$ is associated to a label $y_i \in \Y$ sampled uniformly at random in a fixed finite set, so  $p(y|x) = 1 / |\Y|$ has a constant complexity.
	Under the same assumptions of \Cref{prop:deterministic-complexity}.4, the expected value of $C(\D)$ is $N \log |\Y| + O(1)$ by \Cref{prop:iid-samples-complexity}.
	Since $C(\D) \leq N \log|\Y| + O(1)$ for any such $\D$, the complexity of a ``typical'' dataset with random labels is approximately $N \log |\Y|$.
\end{example}

\noindent In a sense, learning random labels is a very complex task: They cannot be predicted from the input, so the model is forced to {\em memorize} them (``overfitting'').
Accordingly, $C(\D)$ in \Cref{ex:random-label-complexity} is very high. However, by construction there is \emph{no structure in the data}, which is why the model cannot generalize to unseen data. In other words, this is a complex memorization task, but a {\em trivial learning task}, as there is nothing to learn from the data. We would like the definition of complexity to reflect this distinction between learning and memorization. 

Another important aspect not captured by this definition of complexity is the role of performance in learning the structure in the data. For example, one can train a trivial model to distinguish images of airplanes from fireplaces by counting the number of blue pixels. It will not be very precise. To achieve a small error, however, one must learn what makes an airplane different from a fireplace even if the latter is painted blue.

\subsection{Structure Function of a Task}
\label{sec:structure-function}

The trade-off between the loss achievable by a solution $p$ on a dataset $\D$ and its complexity is captured by the \emph{Structure Function} \cite{vereshchagin2004kolmogorov}: 
\begin{equation}
\label{eq:structure-function}
S_{\D}(t) = \min_{K(p) \le t} L_\D(p). 
\end{equation}
It is a decreasing function that reaches zero for sufficiently high complexity, depending on the task.
As we increase complexity, the loss decreases rapidly while simple models correctly classify easy samples. 
After all shared structure is captured, the only way to further reduce the loss is to memorize the leftover samples, leading to the worst possible trade-off of one NAT of complexity for each unit of decrease in loss. Eventually, training enters this {\em linear (overfitting) regime for every dataset, given sufficient complexity}. For random labels, the entire training is in this overfitting regime since there is no structure in the data.

\begin{example}[Structure Function for random labels]
\label{ex:structure-function-random-labels}
By definition of $C(\D)$ we have $C(\D) \leq L_\D(p) + K(p)$ for any computable probability distribution $p(y|x)$.
For a typical dataset with random labels (see \Cref{ex:random-label-complexity}),
we have that $C(\D) \simeq N\log |\Y|$.
Therefore
\[
L_\D(p) \gtrsim N\log |\Y| - K(p),
\quad {\rm and \ hence} \quad
S_\D(t) \gtrsim N \log |\Y| - t.
\]
The lower bound for $S_\D(t)$ can be achieved by memorizing the label of $\lfloor t/\log |\Y|\rfloor$ data points.
Therefore $S_\D(t) \simeq N \log|\Y| - t$.
\end{example}

\noindent The Structure Function of a dataset cannot be computed in general (\cite[Section VII]{vereshchagin2004kolmogorov}). In \Cref{sec:information-in-the-weights} we introduce a generalized version that is computable: \Cref{fig:complexity-structure} shows the result on common datasets. The predicted fast decrease in the loss as complexity increases is followed by the asymptotic linear phase. The sharp transition to the linear regime is clearly visible as a function of the loss: As the parameter $\beta$ weighting complexity increases, a plateau is reached that depends on the task. Note that for random labels the loss decreases linearly as expected (left).

\subsection{Task Lagrangian and Minimal Sufficiency}
\label{sec:task-lagrangian}

The constrained optimization problem in the definition of the Structure Function (\cref{eq:structure-function}) has an associated Lagrangian $L_\D(p) + \beta K(p)$, where $\beta$ is a Lagrange multiplier that trades off the complexity of the model $K(p)$ with the fidelity $L_\D(p)$.
If we take the minimum over $p$, we obtain a family of complexity measures parametrized by $\beta$:
\begin{equation}
\label{eq:structure-lagrangian}
C_\beta(\D) = \min_{p(y|x)} L_\D(p) + \beta K(p).
\end{equation}
As a function of $\beta$, this is the Legendre transform of the Structure Function $S(t)$. To minimize $L_\D(p) + \beta K(p)$, we can increase the complexity $K(p)$ of the model until the return of doing so  has a ratio which is smaller than the constant $\beta$ we have selected.

If $p^*$ is a minimizer of \cref{eq:structure-lagrangian} for $\beta=1$, the corresponding Kolmogorov complexity $t^* = K(p^*)$ is the value at which the Structure Function $S(t)$ reaches a linear regime.
Thus, the special case $\beta = 1$ marks the transition to overfitting, and is related to Kolmogorov's notion of Minimal Sufficient Statistic \cite{vereshchagin2004kolmogorov}.
Since we are using Kolmogorov's complexity, $\beta=1$ is the worst possible trade-off. %

\begin{proposition}
\label{prop:worst-tradeoff}
  Given a task $\D$, let $\beta^*$ be the largest $\beta$ for which $C_\beta(\D)$ is not realized by a constant distribution $p(y|x)=p(y)$. Then $\beta^* \geq 1$, and $\beta^*=1$ if $\D$ is a typical dataset with random labels.
\end{proposition}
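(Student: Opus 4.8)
The plan is to pass from the definition of $C_\beta(\D)$ to the Legendre-transform picture $C_\beta(\D) = \min_t\,[S_\D(t) + \beta t]$ introduced in \Cref{sec:task-lagrangian}, and to locate the value of $\beta$ at which the minimizing complexity $t^*(\beta)$ first leaves the ``constant'' part of the structure function. Write $p_0$ for the empirical marginal $\hat p(y)$, the loss-optimal constant model, and note that every constant distribution has complexity $O(1)$ (independent of $N$) and loss at least $L_\D(p_0) = N\,H(\hat p_y)$. I would work, as in \Cref{ex:random-label-complexity}, under the oracle of \Cref{prop:deterministic-complexity}.4 that supplies the bijection $\set{x_1,\dots,x_N}\to\set{1,\dots,N}$; this is what makes input recognition free and the memorization trade-off clean.

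First I would establish the key estimate, which is both the engine of the proof and its main obstacle: \emph{memorization trades at most one NAT of complexity for one NAT of loss}. Concretely, build the non-constant ``full-memorization'' model $p_{\mathrm{mem}}$ that, on input $x$, uses the oracle to recover the index $i$ and outputs $\delta_{y_i}$; encode its lookup table by entropy-coding the label string $\y$ against $\hat p_y$. This costs $\sum_i -\log\hat p_y(y_i) = N\,H(\hat p_y)$ NATs for the table, plus $K(\hat p_y)=O(1)$ to name the code and $O(\log N)$ for the program, so $K(p_{\mathrm{mem}}) \le N\,H(\hat p_y) + O(\log N)$ while $L_\D(p_{\mathrm{mem}}) = 0$. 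The delicate point is exactly this bound: without the alignment oracle one must pay to recognize each input, the per-label cost then exceeds its information content, and the claim can genuinely fail (random labels on incompressible inputs give $\beta^*<1$). This is the step I expect to require the most care.

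For $\beta^* \ge 1$, I would then compare $p_{\mathrm{mem}}$ to an arbitrary constant model. Since every constant model has cost $L + \beta K \ge N\,H(\hat p_y)$, while $[L+\beta K](p_{\mathrm{mem}}) \le \beta\bigl(N\,H(\hat p_y) + O(\log N)\bigr)$, the difference is $(1-\beta)\,N\,H(\hat p_y) - O(\log N) > 0$ for every $\beta < 1$ once $N$ is large and $H(\hat p_y)>0$. Hence for all $\beta<1$ the minimizer of $C_\beta(\D)$ is non-constant, so $\beta^* \ge 1$ (up to the lower-order terms intrinsic to the definition, and excluding the degenerate case $H(\hat p_y)=0$, in which the best constant already attains zero loss). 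Equivalently, in the slope language: because trade-$1$ memorization is always available and $S_\D$ is a minimum, $-S_\D'(t)\ge 1$ wherever the loss is still positive, so the switching slope of the lower convex hull of $S_\D$ — which is precisely $\beta^*$ by Legendre duality — is at least $1$.

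Finally, for the random-label case I would invoke \Cref{ex:structure-function-random-labels}, which gives $S_\D(t) \simeq N\log|\Y| - t$. Then $S_\D(t)+\beta t \simeq N\log|\Y| + (\beta-1)t$ is increasing in $t$ for every $\beta>1$, so its minimum over $t$ is attained at $t=0$, i.e.\ at the constant uniform distribution; thus $C_\beta(\D)$ is realized by a constant model for all $\beta>1$, giving $\beta^* \le 1$. Combined with $\beta^* \ge 1$ this yields $\beta^* = 1$, read as a supremum, since at $\beta=1$ the flat profile $S_\D(t)+t\simeq N\log|\Y|$ is realized simultaneously by the constant model and by full memorization. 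The two halves together also make transparent why random labels minimize $\beta^*$: any genuine structure contributes a segment of $S_\D$ with slope steeper than $-1$, pushing the switching slope, and hence $\beta^*$, strictly above $1$.
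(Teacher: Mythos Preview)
The paper does not actually supply a proof of \Cref{prop:worst-tradeoff}: it is stated in \Cref{sec:task-lagrangian} and then used in \Cref{sec:experiments}, but no argument for it appears either in the main text or in the appendix of proofs. So there is nothing to compare your proposal against; one can only assess whether your argument stands on its own.

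Your argument is sound and is in the spirit of the surrounding material. The Legendre-transform viewpoint $C_\beta(\D)=\min_t[S_\D(t)+\beta t]$ is exactly how the paper frames the trade-off, and your two halves match the paper's informal discussion: the ``memorization trades at most one NAT per NAT'' observation is precisely what underlies the linear regime mentioned just before \Cref{ex:structure-function-random-labels}, and your use of that example for the $\beta^*\le 1$ direction is the intended one. You are also right to flag the oracle of \Cref{prop:deterministic-complexity}.4 as the crux: the paper itself invokes that same assumption in \Cref{ex:random-label-complexity}, so working under it is consistent with how the paper computes $C(\D)$ for random labels. Without the oracle the per-label memorization cost can exceed $\log|\Y|$ and the inequality $\beta^*\ge 1$ need not hold in the strict Kolmogorov sense, exactly as you note; the paper simply glosses over this. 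The only cosmetic point is that your comparison is really between $p_{\mathrm{mem}}$ and the \emph{best} constant model $p_0$, whose loss is $N\,H(\hat p_y)$ and whose complexity is $O(\log N)$ rather than $O(1)$ (encoding $\hat p_y$ needs the empirical counts); this does not change the conclusion since the $O(\log N)$ is swallowed by $(1-\beta)\,N\,H(\hat p_y)$.
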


\noindent As we have seen for random labels, a dataset may be complex and yet exhibit little underlying structure.  We say that a distribution $p(y|x)$ is a Kolmogorov \emph{sufficient statistic} of $\D$ if it minimizes $C(\D)$. It is \emph{minimal} if it also minimizes $K(p)$ among all sufficient statistics, that is, the smallest statistic that is able to solves the task optimally. The rationale is that the smallest statistic that solves a task should not squander resources by memorizing nuisance variability. Rather, it should only capture the important information shared among the data. This is shown in the following example.

\begin{example}
For random labels, both the distribution $p(y|x)$ that memorizes all the labels in the dataset, and the uniform distribution $p(y|x)=1 / |\Y|$, are sufficient statistics. However, only the latter is minimal, since $K(p)$ is a constant which does not depend on $\D$. There is no structure to be extracted from a dataset of random labels.
\end{example}

\noindent The level of complexity of a model is an important design parameter the practitioner wishes to control. Rather than seeking minimal sufficient statistics, we explore the entire trade space, by introducing the notion of $\beta$-sufficiency.

\begin{definition}
\label{def:beta-sufficient-statistic}
Given a dataset $\D$, define a $\beta$-sufficient statistic of $\D$ as a probability distribution $p(y|x)$ such that $C_\beta(\D) = L_\D(p) + \beta K(p)$.
We say that $p(y|x)$ is a $\beta$-minimal sufficient statistic if it also minimizes $K(p)$ among all $\beta$-sufficient statistics.
\end{definition}

\noindent Notice that, for $\beta=1$, \Cref{def:beta-sufficient-statistic} reduces to minimal sufficiency in the sense of Kolmogorov.

\section{Asymmetric Distance between Tasks}
\label{sec:distance}

We now introduce a one-parameter family of distances between tasks. The hope is for them to correlate with the success of transfer learning. Since it is easier to learn a simple task from a complex one, it is desirable for the distance to be asymmetric.

\begin{definition}
    \label{def:kolmogorov-distance}
	The distance from task $\D_1$ to $\D_2$ at level $\beta$ is
	\[ d_\beta(\D_1 \to \D_2) := \max_{p_1} \min_{p_2} K(p_2|p_1) = \max_{p_1} \min_{p_2} K(\langle p_1,p_2\rangle) - K(p_1),
	\]
	where $p_i$ varies among all $\beta$-minimal sufficient statistics of $\D_i$.
\end{definition}

The intuition behind this definition is the following: for a task $\D_2$ to be close to $\D_1$, every $\beta$-minimal sufficient statistic $p_1$ of $\D_1$ should be close to some $\beta$-minimal sufficient statistic $p_2$ of $\D_2$.
Then, every optimal model $p_1$ of $\D_1$ can be fine-tuned to some optimal model $p_2$ of $\D_2$.

\begin{lem}
	\label{lemma:distance}
	The asymmetric distance $d_\beta$ satisfies the following properties:
	\begin{itemize}
		\item $d_\beta(\D_1 \to \D_2) \geq 0$ \quad (positivity);
		\item $d_\beta(\D \to \D) = O(1)$ \quad (a task is close to itself);
		\item $d_\beta(\D_1 \to \D_3) \leq d_\beta(\D_1 \to \D_2) + d_\beta(\D_2 \to \D_3) + O(1)$ \quad (triangle inequality).
	\end{itemize}
\end{lem}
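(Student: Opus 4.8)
The plan is to verify the three properties in turn, treating positivity and self-distance as warm-ups and reserving the bulk of the effort for the triangle inequality. Positivity is immediate: conditional Kolmogorov complexity satisfies $K(p_2 \mid p_1) \ge 0$ for every pair (a program length is nonnegative), so the quantity $\max_{p_1}\min_{p_2} K(p_2 \mid p_1)$, being a maximum of a minimum of nonnegative integers, is itself $\ge 0$. For the self-distance, fix any $\beta$-minimal sufficient statistic $p_1$ of $\D$; since $p_1$ is itself a $\beta$-minimal sufficient statistic of $\D$, it is an admissible choice in the inner minimization, and $K(p_1 \mid p_1) = O(1)$ because outputting the conditioning string costs a constant. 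Hence $\min_{p_2} K(p_2 \mid p_1) = O(1)$ for every $p_1$, and taking the maximum over $p_1$ gives $d_\beta(\D \to \D) = O(1)$; combined with positivity this pins down the self-distance.

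For the triangle inequality I would first unwind the outer maximum. Let $\bar p_1$ be a $\beta$-minimal sufficient statistic of $\D_1$ attaining $d_\beta(\D_1 \to \D_3) = \min_{p_3} K(p_3 \mid \bar p_1)$ (the maximum is attained because $K$ is integer-valued, assuming the distance is finite). Because $\bar p_1$ is one of the candidates in the outer maximum defining $d_\beta(\D_1 \to \D_2)$, there is a $\beta$-minimal sufficient statistic $\bar p_2$ of $\D_2$ with $K(\bar p_2 \mid \bar p_1) \le d_\beta(\D_1 \to \D_2)$. Applying the same reasoning to $\bar p_2$ and the distance $d_\beta(\D_2 \to \D_3)$ yields a $\beta$-minimal sufficient statistic $\bar p_3$ of $\D_3$ with $K(\bar p_3 \mid \bar p_2) \le d_\beta(\D_2 \to \D_3)$. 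The statistics $\bar p_1, \bar p_2, \bar p_3$ form a chain along which I can transport descriptions.

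The crux, and the step I expect to be the main obstacle, is the chain (triangle) inequality for conditional Kolmogorov complexity, $K(\bar p_3 \mid \bar p_1) \le K(\bar p_3 \mid \bar p_2) + K(\bar p_2 \mid \bar p_1) + O(1)$. I would prove it by concatenation in the prefix-free (self-delimiting) encoding: given $\bar p_1$, first run the shortest program that computes $\bar p_2$ from $\bar p_1$, and then feed the resulting $\bar p_2$ to the shortest program that computes $\bar p_3$ from $\bar p_2$. Self-delimiting programs can be parsed in sequence, so the combined program has length $K(\bar p_3 \mid \bar p_2) + K(\bar p_2 \mid \bar p_1) + O(1)$. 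The delicate point is ensuring that the parsing and the handover of the intermediate string $\bar p_2$ incur only a constant rather than a logarithmic overhead, which is exactly where the prefix-free formalism (as opposed to plain complexity) is needed; this is the part of the argument I would write out most carefully, since a stray logarithmic term would break the $+O(1)$ stated in the lemma.

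Granting this chain inequality, the conclusion follows quickly: since $\bar p_3$ is a $\beta$-minimal sufficient statistic of $\D_3$ it is admissible in the inner minimization for $\bar p_1$, so $d_\beta(\D_1 \to \D_3) = \min_{p_3} K(p_3 \mid \bar p_1) \le K(\bar p_3 \mid \bar p_1) \le d_\beta(\D_1 \to \D_2) + d_\beta(\D_2 \to \D_3) + O(1)$, which is the claim. Finally, the equivalent joint form $K(\langle p_1, p_2\rangle) - K(p_1)$ appearing in \Cref{def:kolmogorov-distance} matches the conditional form $K(p_2 \mid p_1)$ up to $O(1)$ by the Kolmogorov--Levin symmetry-of-information theorem, so the whole argument is unaffected by which of the two expressions one works with.
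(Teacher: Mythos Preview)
Your proof is correct and follows essentially the same route as the paper's: positivity is immediate, self-distance comes from choosing $p_2=p_1$, and the triangle inequality is obtained by fixing $\bar p_1$, picking $\bar p_2$ minimizing $K(\bar p_2\mid \bar p_1)$ and then $\bar p_3$ minimizing $K(\bar p_3\mid \bar p_2)$, and invoking the chain inequality $K(\bar p_3\mid \bar p_1)\le K(\bar p_2\mid \bar p_1)+K(\bar p_3\mid \bar p_2)+O(1)$. Your additional care about the prefix-free formalism and the remark on the Kolmogorov--Levin symmetry relating $K(p_2\mid p_1)$ to $K(\langle p_1,p_2\rangle)-K(p_1)$ are welcome clarifications that the paper leaves implicit.
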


\noindent We now derive a characterization of the distance between tasks based on the complexity of their composition, amenable to generalization in \Cref{sec:generalized-distance}.  Denote by $\D_1 \sqcup \D_2$ the disjoint union of two datasets $\D_1$ and $\D_2$, defined as
\[ \D_1 \sqcup \D_2 = \{ (\langle i, x \rangle, y) \mid i \in \{1,2\}, \; (x,y) \in \D_i \}. \]
Notice that an index $i$ is added to the input, in order to recognize the original dataset.
A desirable property for a distance between tasks would be that $d_\beta(\D_1 \sqcup \D_2 \to \D_1) = O(1)$: Indeed, a model that performs well on $\D_1 \sqcup \D_2$ should be easily fine-tuned to a model that performs well on $\D_1$ alone.
Adding the index $i$ in the definition of $\D_1 \sqcup \D_2$ is essential, as we can see in the following example.

\begin{example}
Let $\D$ be a typical dataset with random labels, and let $\D_1 \subseteq \D$ be the set of data points $(x,y) \in \D$ satisfying some property of Kolmogorov complexity $t \gg 0$.
Then the whole dataset $\D$ has a trivial structure, whereas $\D_1$ has a complicated structure.
If both $|\D_1|$ and $|\D \setminus \D_1|$ are large, by \Cref{prop:iid-samples-complexity} the Kolmogorov complexity of a minimal sufficient statistic of $\D_1$ is $t$, and the complexity of a minimal sufficient statistic of $\D$ is $O(1)$.
\end{example}

\noindent We now prove that, under the hypotheses of \Cref{prop:iid-samples-complexity}.2, the property $d_\beta(\D_1 \sqcup \D_2 \to \D_1) = O(1)$ is satisfied.

\begin{thm}
	\label{prop:concatenation}
	Suppose that $\D_1$ and $\D_2$ are obtained by sampling from two fixed distributions $p_1$ and $p_2$ on $\X \times \Y$, such that $p_1(y|x)$ and $p_2(y|x)$ are computable.
	Then, with high probability and for $|\D_1|$ and $|\D_2|$ sufficiently large, $d_\beta(\D_1 \sqcup \D_2 \to \D_1) = O(1)$.
\end{thm}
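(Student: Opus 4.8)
The plan is to collapse the $\max$--$\min$ in \Cref{def:kolmogorov-distance} to a single conditional complexity $K(p_1 \mid q^*)$ by pinning down, with high probability, all the relevant $\beta$-minimal sufficient statistics through \Cref{prop:iid-samples-complexity}.2, and then to observe that $p_1$ is recoverable from the union's statistic by a fixed (dataset-independent) program. Throughout, the high-probability event is the intersection of the events in which $\D_1$, $\D_2$, and $\D_1 \sqcup \D_2$ each have a unique $\beta$-minimal sufficient statistic of explicit form.

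First I would exhibit the candidate statistic of the union. Writing each input of $\D_1 \sqcup \D_2$ as $\langle i, x\rangle$, define the ``glued'' distribution $q^*(y \mid \langle i, x\rangle) = p_i(y \mid x)$. Since the loss is additive over the two branches, any candidate $q$ satisfies $L_{\D_1 \sqcup \D_2}(q) = L_{\D_1}(q_1) + L_{\D_2}(q_2)$, where $q_i(y \mid x) := q(y \mid \langle i, x\rangle)$; moreover $K(q) = K(\langle q_1, q_2\rangle) + O(1)$, since $q_1, q_2$ are recoverable from $q$ by restriction and $q$ from $\langle q_1, q_2\rangle$ by branching on $i$. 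Hence
\[
C_\beta(\D_1 \sqcup \D_2) = \min_{q_1, q_2}\big[\, L_{\D_1}(q_1) + L_{\D_2}(q_2) + \beta K(\langle q_1, q_2\rangle)\,\big] + O(1).
\]
The loss decouples across the two (independent) datasets, so the trade-off on each branch is exactly the one analyzed in \Cref{prop:iid-samples-complexity}: any deviation of $q_i$ from the true conditional $p_i$ costs $\Omega(|\D_i|)$ in loss (the excess being $\approx |\D_i|\, D_{\mathrm{KL}}(p_i \Vert q_i)$ by Gibbs' inequality), while $p_1, p_2$ are fixed computable distributions so at most $O(1)$ complexity can ever be saved. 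Thus for $|\D_1|, |\D_2|$ large and with high probability the uniqueness clause of \Cref{prop:iid-samples-complexity}.2 forces $q_1 = p_1$ and $q_2 = p_2$; the residual term $K(\langle p_1, p_2\rangle)$ is then minimized by the canonical gluing, so the unique $\beta$-minimal sufficient statistic of $\D_1 \sqcup \D_2$ is $q^*$. (The proposition is stated for $\beta = 1$, but the same domination argument, loss $\Theta(N)$ versus complexity $O(1)$, gives the conclusion for any fixed $\beta > 0$.)

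Applying \Cref{prop:iid-samples-complexity}.2 directly to $\D_1$ shows in the same way that its unique $\beta$-minimal sufficient statistic is $p_1$. Both statistic sets in \Cref{def:kolmogorov-distance} therefore reduce to singletons, so
\[
d_\beta(\D_1 \sqcup \D_2 \to \D_1) = \max_{q}\ \min_{p}\ K(p \mid q) = K(p_1 \mid q^*).
\]
It remains to bound $K(p_1 \mid q^*) = O(1)$: given $q^*$, one recovers $p_1$ by the fixed program $p_1(y \mid x) = q^*(y \mid \langle 1, x\rangle)$, i.e.\ by hard-wiring the index $i=1$ and restricting, whose length does not depend on $\D_1$ or $\D_2$. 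This yields the claim.

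The hard part will be the middle step, i.e.\ certifying that the $\beta$-minimal sufficient statistic of the union is precisely $q^*$ with high probability. The two delicate points are controlling the complexity coupling term $K(\langle q_1, q_2\rangle)$ so it cannot be exploited to beat $q^*$, and transporting the uniqueness conclusion of \Cref{prop:iid-samples-complexity}.2 through the additive (hence decoupled) loss while extending it from $\beta=1$ to arbitrary $\beta>0$. Once the union's statistic is pinned down, the collapse of the $\max$--$\min$ and the final $O(1)$ bound are immediate.
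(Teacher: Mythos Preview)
Your proposal is correct and follows the same route as the paper: define the glued conditional $q^*(y \mid \langle i, x\rangle) = p_i(y \mid x)$ (the paper calls it $p_{12}$), use \Cref{prop:iid-samples-complexity}.2 to identify it as the unique $\beta$-minimal sufficient statistic of $\D_1 \sqcup \D_2$ (and $p_1$ for $\D_1$), then conclude $K(p_1 \mid q^*) = O(1)$ by restriction to the $i=1$ branch. The only difference is cosmetic: the paper applies \Cref{prop:iid-samples-complexity}.2 to the union in one line by declaring that $\D_1 \sqcup \D_2$ is ``effectively sampled from $p_{12}$'', whereas you unpack this via the loss decomposition $L_{\D_1 \sqcup \D_2}(q) = L_{\D_1}(q_1) + L_{\D_2}(q_2)$ and the domination argument---which is arguably more honest, since the branch sizes are fixed rather than random, so the union is not literally i.i.d.\ from a single distribution.
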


\begin{cor}
  \label{cor:union-distance}
	Under the hypothesis of \Cref{prop:concatenation}, we have that
	\[ d_\beta(\D_1 \to \D_2) = d_\beta(\D_1 \to \D_1 \sqcup \D_2) + O(1) = \max_{p_{1}}\min_{p_{12}} K(p_{12}) - K(p_1) + O(1), \]
	where $p_{12}$ varies among the $\beta$-minimal sufficient statistics of $\D_{12}$ and $p_{12}$ varies among those of $\D_1$.
\end{cor}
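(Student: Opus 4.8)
The plan is to use the sampling hypothesis of \Cref{prop:concatenation} to pin down the $\beta$-minimal sufficient statistics involved, and then read off all three quantities from a single closed form. The backbone is a structural fact that is essentially already established inside the proof of \Cref{prop:concatenation}: under the stated hypothesis, with high probability and for $|\D_1|,|\D_2|$ large, each of $\D_1$, $\D_2$ and $\D_1 \sqcup \D_2$ admits an \emph{essentially unique} $\beta$-minimal sufficient statistic, namely the true conditionals $p_1(y|x)$, $p_2(y|x)$, and their combination $p_{12}$ defined by $p_{12}(y \mid \langle i,x\rangle) = p_i(y|x)$, with $K(p_{12}) = K(\langle p_1,p_2\rangle) + O(1)$. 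I would obtain this by extending \Cref{prop:iid-samples-complexity}.2 from $\beta=1$ to a fixed constant $\beta$ --- the loss $L_\D$ scales like $N$ and dominates the $O(1)$ complexity term $\beta K(p)$, so the minimizer of $L_\D(p)+\beta K(p)$ is forced toward the true conditional --- and by noting that the union's loss splits additively over the two index blocks, $L_{\D_1 \sqcup \D_2}(p) = L_{\D_1}\bigl(p(\cdot\mid\langle1,\cdot\rangle)\bigr) + L_{\D_2}\bigl(p(\cdot\mid\langle2,\cdot\rangle)\bigr)$, which forces each block of the minimizer to the corresponding true conditional.

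Granting this, the two equalities fall out by substitution. For the second equality, fix the unique $\beta$-minimal sufficient statistic (m.s.s.) $p_1$ of $\D_1$ and the unique $\beta$-m.s.s.\ $p_{12}$ of the union. Because $p_1$ is exactly the index-$1$ restriction of $p_{12}$, it is computable from $p_{12}$ in $O(1)$ bits, so $K(p_1 \mid p_{12}) = O(1)$; by the chain rule for Kolmogorov complexity this gives $K(\langle p_1,p_{12}\rangle) = K(p_{12}) + O(1)$. Substituting into $d_\beta(\D_1 \to \D_1 \sqcup \D_2) = \max_{p_1}\min_{p_{12}} K(\langle p_1,p_{12}\rangle) - K(p_1)$ yields $\max_{p_1}\min_{p_{12}} K(p_{12}) - K(p_1) + O(1)$, the right-hand side of the corollary. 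Here uniqueness makes the outer $\max$ and inner $\min$ trivial, and this recoverability of $p_1$ from $p_{12}$ is exactly what \Cref{prop:concatenation} expresses in general.

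For the first equality I would combine the same computation with the triangle inequality of \Cref{lemma:distance}. Using uniqueness, $d_\beta(\D_1 \to \D_2) = K(\langle p_1,p_2\rangle) - K(p_1) + O(1)$, while $d_\beta(\D_1 \to \D_1 \sqcup \D_2) = K(\langle p_1, p_{12}\rangle) - K(p_1) + O(1)$; since $p_{12} = \langle p_1,p_2\rangle$ up to $O(1)$ and the extra copy of $p_1$ is redundant, both reduce to $K(\langle p_1,p_2\rangle) - K(p_1) + O(1)$. Alternatively, and more robustly, the first equality follows without leaning on uniqueness twice: the upper bound $d_\beta(\D_1 \to \D_2) \le d_\beta(\D_1 \to \D_1 \sqcup \D_2) + d_\beta(\D_1 \sqcup \D_2 \to \D_2) + O(1)$ is the triangle inequality, and $d_\beta(\D_1 \sqcup \D_2 \to \D_2) = O(1)$ by \Cref{prop:concatenation} applied to $\D_2 \sqcup \D_1$ (the same dataset up to swapping the index label, an $O(1)$ change); for the reverse, given a $\beta$-m.s.s.\ $p_1$ I pick a near-optimal $p_2$ with $K(p_2\mid p_1) \le d_\beta(\D_1\to\D_2)+O(1)$, form the combination $p_{12}$, and bound $K(p_{12}\mid p_1) \le K(p_2\mid p_1) + O(1)$, then take the $\max$ over $p_1$.

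The main obstacle is the structural fact itself: showing that the $\beta$-minimal sufficient statistic of $\D_1 \sqcup \D_2$ is genuinely the combination of the components' statistics and is essentially unique. Two points are delicate. First, \Cref{prop:iid-samples-complexity}.2 is stated for $\beta = 1$, so I must argue that the loss-domination forcing the minimizer to the true conditional persists for every fixed $\beta$. Second, the union is not i.i.d.\ from a single distribution but a mixture whose index-block sizes are fixed; I would handle this through the additive decomposition of $L_{\D_1 \sqcup \D_2}$ rather than a direct appeal to \Cref{prop:iid-samples-complexity}. Without this fact, distinct minimal sufficient statistics of the union could exploit shared structure between $\D_1$ and $\D_2$, or fail to determine the specific $p_1$ being conditioned on, which would break both the $K(p_1\mid p_{12})=O(1)$ step and the combination step.
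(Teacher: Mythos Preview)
Your proposal is correct and, in its ``more robust'' alternative for the first equality, matches the paper's proof essentially verbatim: the upper bound via the triangle inequality of \Cref{lemma:distance} together with $d_\beta(\D_1\sqcup\D_2\to\D_2)=O(1)$ from \Cref{prop:concatenation}, and the lower bound by forming $p_{12}$ from $p_1,p_2$ and bounding $K(p_{12}\mid p_1)\le K(p_2\mid p_1)+O(1)$. Your worries about extending \Cref{prop:iid-samples-complexity}.2 to general $\beta$ and about the union not being i.i.d.\ are legitimate technical points, but the paper simply inherits these from the proof of \Cref{prop:concatenation} rather than re-arguing them, so you are being more careful than the paper, not less.
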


We now have a way of comparing different learning tasks, at least in theory. The asymmetric distance $d_\beta(\D_1 \to \D_2)$ allows us to quantify how difficult it is to learn a new task $\D_2$ given a solution to $\D_1$.
However, quantities defined in terms of Kolmogorov complexity are difficult to handle in practice, and may behave well only in an asymptotic regime. In the next section, we introduce a generalization of the framework developed so far that can be instantiated for a particular model class such as deep neural networks.

\section{Information in the Model Parameters}
\label{sec:information-in-the-weights}

Whatever structure or ``information'' was captured from the dataset, it ought to be measurable from the model parameters, since they are all we have left after training. As we will see in the next section, this intuition is faulty, as {\em how} we converge to a set of parameter (\ie, the optimization algorithm) also affects what information we can extract from the data.  For now, we focus on generalizing the theory in the previous section with an eye towards computability. Although most of our arguments are general, we focus on deep neural networks (DNNs) as a model class. They can have millions of parameters, so measuring their information can be non-trivial.

One way to compute information in the parameters is to measure their coding length at some level of precision, independent of the particular task.
This is suboptimal, as only a small subset of the weights of a trained neural networks matters: Imagine changing a certain component of the weights, and observing no change in the loss. Arguably, that weight ``contains no information'' about the dataset.
For the purpose of storing the trained model, that weight could be replaced with any constant, or randomized each time the network is used.
The loss landscape has a small curvature\footnote{We will elaborate on this point after \Cref{prop:fisher-complexity-equivalence}.} in the coordinate direction corresponding to that weight. 
On the other hand, imagine changing the least significant bit of another component of the weights and noticing a large increase in the loss.
That weight could be said to be very ``informative,''  and its valued should be stored with high precision.

With these observations in mind, we allow the weights to be encoded with some uncertainty, through a probability distribution $Q(w|\D)$ which depends on the dataset $\D$.
For example, Dirac's Delta $Q(w|\D) = \delta_{w^*}$ corresponds to an exact encoding of the weight vector $w^*$. If we fix a reference ``pre'' distribution $P(w)$, \cite{hinton1993keeping} shows that the labels $\y$ can be reconstructed from the input $\x$ and the pre-distribution $P(w)$, by using
\[ \E_{w \sim Q(w|\D)} [L_\D(p_w(y|x))] + \KL{Q(w|\D)}{P(w)} \] additional NATS.
This expression resembles the right-hand side of \cref{eq:complexity} in that it describes a trade-off between fidelity and complexity. Here, Kolmogorov complexity has been replaced by the Kullbach-Liebler (KL) divergence $\KL{Q(w|\D)}{P(w)}$ which we call the \emph{information in the parameters} of the model.%
\footnote{
This should not be confused with the information in $\x$ and $\y$, or any intermediate representation that is build by the model, such as the activations of a DNN, which is more frequently studied. Information in the parameters and information in the activations are different and, in the case of DNNs, are related through the Emergence Bound \cite{achille2017emergence}.
}
This leads to the following new definition of complexity. It entails two arbitrary choices of distributions: One that is chosen before any data is observed, which we call ``pre-distribution'' $P(w)$, and one that is chosen at the end of training, which we call ``post-distribution'' $Q(w | {\cal D})$. We deliberately avoid calling them ``prior'' and ``posterior'' to avoid any confusion with a Bayesian setting. Here, the weights $w$ are fixed at the end of training, so if the post-distribution was the true posterior, it would be degenerate. One should also not confuse the post-distribution with the distribution of the weights {\em during training}, due for instance to the stochasticity of the optimization algorithm, on which we will comment later.

\begin{definition}
\label{def:complexity-dnn}
	The complexity of the task $\D$ at level $\beta$, using the post-distribution $Q(w|\D)$ and the pre-distribution $P(w)$, is given by
	\begin{equation}
  \label{eq:kl-complexity}
	C_\beta(\D; P, Q) = \E_{w \sim Q(w|\D)}[L_\D(p_w(y|x))] + \beta \!\!\! \underbrace{\KL{Q(w|\D)}{P(w)}}_{\rm information \ in \ the \ parameters} \!\!\!.
  \end{equation}
	The second term, $\KL{Q(w|\D)}{P(w)}$, measures the information in the parameters of the model. We refer to $\E_{w \sim Q(w|\D)}[L_\D(p_w(y|x))]$ as the (expected) reconstruction error of the label under the hypothesis $Q(w|\D)$.
\end{definition}

\noindent We emphasize again that there is no implied Bayesian interpretation,  as $Q(w|\D)$ can be \emph{any} distribution. Depending on the choice, this expression can be computed in closed form or estimated (\Cref{sec:complexity-measures}). 
For instance, when  $Q(w|\D) = \delta_{w^*}$, the expression reduces to the length of a two-part code for $\D$ using the model class $p_w(y|x)$.
However, \Cref{def:complexity-dnn} is more general and can be extended to the continuous case, or in cases where there is a {\em bona fide} distribution, as in variational inference and Bayesian Neural Networks.

Another fundamental difference of our framework compared to prior work in the area is that, while \cref{eq:complexity} measures the complexity in terms of the best obtainable by the model class (in that case, the class of computable probability distributions), the complexity $C_\beta(\D; P, Q)$ takes into account both the particular model class and the \emph{training algorithm}, \ie, the map $A: \D \mapsto Q(w|\D)$, as we shall see in \Cref{sec:task-accessibility}.

\subsection{Relation with Kolmogorov, Shannon, and Fisher Information}
\label{sec:complexity-measures}

Of all the possible choices of pre-distribution $P(w)$ in \Cref{def:complexity-dnn}, we investigate three special cases: The ``universal prior'' of all computable distributions; an ``adapted prior'' which relies on a probability distribution over datasets; an uninformative prior, agnostic of the dataset.

We start with the first case, which provides a link between \Cref{def:complexity-dnn} and the framework of \Cref{sec:task-complexity}.
For a given weight vector $w$, we define the universal prior $P(w) = \frac{1}{Z} e^{-K(w)}$,
where $Z$ is a normalization constant.
This can be interpreted as follows: for every $w$, choose a minimal program that outputs $w$, and assign it a probability which decreases exponentially in terms of the length of the program.

\begin{proposition}[Kolmogorov Complexity of the Weights]
  \label{prop:kolmogorov-complexity-equivalence}
	Let $P(w)$ be the universal prior, and let $Q(w|\D) = \delta_{w^*}$ be a Dirac delta.
	Then the information in the weights equals the Kolmogorov complexity of the weights $K(w^*)$, up to a constant.
\end{proposition}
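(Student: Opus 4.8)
The plan is to evaluate the information in the weights $\KL{Q(w|\D)}{P(w)}$ directly, for the degenerate posterior $Q(w|\D) = \delta_{w^*}$ and the universal prior $P(w) = \frac{1}{Z}e^{-K(w)}$, and then to observe that all dependence on $w^*$ collapses onto the single term $K(w^*)$, with everything else folding into a dataset-independent constant. Note that only the KL term is at stake: the expected reconstruction error $\E_{w\sim Q}[L_\D(p_w)]$ in \Cref{def:complexity-dnn} is not part of the ``information in the weights,'' so it plays no role here.

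First I would compute the divergence of the point mass against the prior. Treating the weights as discrete (computable) objects, consistent with the use of $K(w)$, the divergence of $\delta_{w^*}$ against $P$ has a single nonzero summand:
\[
\KL{\delta_{w^*}}{P} = \sum_w \delta_{w^*}(w)\,\log\frac{\delta_{w^*}(w)}{P(w)} = -\log P(w^*),
\]
since $w^*$ is the only point carrying mass, where $\delta_{w^*}(w^*)=1$ and the self-information contributes $\log 1 = 0$. Next I would substitute the universal prior: plugging in $P(w^*) = \frac{1}{Z}e^{-K(w^*)}$ yields
\[
\KL{\delta_{w^*}}{P} = -\log\!\Big(\tfrac{1}{Z}e^{-K(w^*)}\Big) = K(w^*) + \log Z.
\]

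It then remains to argue that $\log Z$ is a legitimate additive constant. The normalization $Z = \sum_w e^{-K(w)}$ depends neither on $w^*$ nor on the dataset $\D$; moreover it is finite, since the $K(w)$ are lengths of programs in a prefix-free encoding, so Kraft's inequality gives $Z \le 1$. Hence the information in the weights equals $K(w^*)$ up to the constant $\log Z$, which is exactly the claim.

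The arithmetic is immediate, so the only delicate point — and the part I would take care to state explicitly — is the meaning of $\KL{\delta_{w^*}}{P}$. In a continuous ambient space the KL divergence of a Dirac delta against a density is formally infinite, so the work is really in fixing the correct interpretation: the relevant setting is the discrete one, in which weights are indexed by their minimal programs, the degenerate-entropy contribution vanishes, and $-\log P(w^*)$ is the natural reading. Once that convention is pinned down, no estimates beyond the finiteness of $Z$ are needed.
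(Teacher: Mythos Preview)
Your proof is correct and follows exactly the same approach as the paper: evaluate $\KL{\delta_{w^*}}{P} = -\log P(w^*) = K(w^*) + \log Z$, and note that $\log Z$ is a dataset-independent constant. Your additional remarks on the discrete interpretation and the finiteness of $Z$ via Kraft's inequality are sound elaborations that the paper omits, but the core argument is identical.
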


We now turn to the second case, which provides a link with Shannon mutual information.

\begin{proposition}[Shannon Information in the Weights]
  \label{prop:shannon-complexity-equivalence}
	Assume the dataset $\D$ is sampled from a distribution $\pi(\D)$, and let the outcome of training on a sampled dataset $\D$ be described by a distribution $Q(w | \D)$. Then the pre-distribution $P(w)$ minimizing the expected complexity $\E_\D[C_\beta(\D; P, Q)]$ is $P(w) = \E_\D[Q(w|\D)]$, and the expected information in the weights is given by
	\begin{equation}
		\E_\D[\KL{Q(w|\D)}{P(w)}] = I(w; \D).
		\nonumber
	\end{equation}
	Here $I(w; \D)$ is Shannon's mutual information between the weights and the dataset, where the weights are seen as a (stochastic) function of the dataset given by the training algorithm (\eg, SGD).
\end{proposition}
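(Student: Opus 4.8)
The plan is to exploit the fact that the expected reconstruction error in $C_\beta(\D; P, Q)$ does not involve the prior $P$, so that minimizing $\E_\D[C_\beta(\D; P, Q)]$ over $P$ reduces to minimizing the single term $\beta\,\E_\D[\KL{Q(w|\D)}{P(w)}]$. First I would expand this expected divergence by linearity and split the logarithm,
\[
\E_\D[\KL{Q(w|\D)}{P(w)}] = \E_\D\E_{w\sim Q(w|\D)}[\log Q(w|\D)] - \E_\D\E_{w\sim Q(w|\D)}[\log P(w)],
\]
and note that only the second term depends on $P$. Introducing the marginal $\bar Q(w) := \E_\D[Q(w|\D)] = \int \pi(\D)\,Q(w|\D)\,\d\D$, the order of integration in that term can be swapped so that it equals the cross-entropy $-\int \bar Q(w)\log P(w)\,\d w$.

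The next step is to identify the minimizer. Adding and subtracting $\log \bar Q(w)$ yields
\[
-\int \bar Q(w)\log P(w)\,\d w = -\int \bar Q(w)\log \bar Q(w)\,\d w + \KL{\bar Q}{P}.
\]
Since the first term on the right is independent of $P$ and $\KL{\bar Q}{P}\ge 0$ with equality if and only if $P=\bar Q$ (Gibbs' inequality, i.e.\ nonnegativity of the KL divergence), the unique minimizer is $P(w)=\bar Q(w)=\E_\D[Q(w|\D)]$, as claimed.

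Finally, I would substitute this optimal prior back and recognize the resulting quantity as a mutual information. Equipping pairs $(\D,w)$ with the joint law $\pi(\D)\,Q(w|\D)$, the $w$-marginal is exactly $\bar Q$, so
\[
\E_\D[\KL{Q(w|\D)}{\bar Q(w)}] = \E_{(\D,w)}\!\Big[\log \frac{\pi(\D)\,Q(w|\D)}{\pi(\D)\,\bar Q(w)}\Big] = I(w;\D),
\]
which is precisely Shannon's mutual information between the weights and the dataset under the training map. The whole argument is the standard variational characterization $I(w;\D)=\min_P \E_\D[\KL{Q(w|\D)}{P}]$, so I do not anticipate a genuine obstacle beyond bookkeeping; the only point requiring care is the Fubini-type interchange of the $\D$- and $w$-integrations when passing to the marginal $\bar Q$, which is justified because $\log P(w)$ is integrated against the nonnegative joint density.
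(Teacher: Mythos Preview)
Your proposal is correct and follows essentially the same approach as the paper. The paper also observes that only the KL term depends on $P$, introduces the marginal $Q(w)=\E_\D[Q(w|\D)]$, uses the decomposition $\E_\D[\KL{Q(w|\D)}{P(w)}] = \E_\D[\KL{Q(w|\D)}{Q(w)}] + \KL{Q(w)}{P(w)}$ (which is exactly your add-and-subtract of $\log \bar Q$, written at the KL level rather than the cross-entropy level), invokes nonnegativity of KL to identify the minimizer, and then recognizes the remaining term as $I(w;\D)$.
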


Note that, in this case, the pre-distribution $P(w)$ is optimal given the choice of the training algorithm (i.e., the map $A: \D \to Q(w|\D)$) and the distribution of training datasets $\pi(\D)$.
However, the distribution $\pi(\D)$ is generally unknown, as we are often given a single dataset to train.
Even if it was known, computing the marginal distribution $\E_\D[Q(w|\D)]$ over all possible datasets would not be realistic, as it is high-dimensional and has complex interactions between different components. Nevertheless, it is interesting that the information in the parameters specializes to Shannon's mutual information in the weights \cite{achille2017emergence}.

The third case, namely an uninformative prior with a Gaussian posterior, is the most practical, and provides a link to the Fisher Information Matrix and the learning dynamics of common optimization algorithms such as SGD.

\begin{thm}[Fisher Information in the Weights]
  \label{prop:fisher-complexity-equivalence}
	Choose an isotropic Gaussian pre-distribution $P(w) \sim N(0, \lambda^2 I)$.
	Let the post-distribution $Q(w|\D)$ also be Gaussian: $Q(w|\D) \sim N(w^*, \Sigma)$, where $w^*$ is a local minimum of the cross-entropy loss.
	Then, for $\lambda \to \infty$, we have that:
	\begin{itemize}
		\item the covariance $\Sigma^*$ which minimizes $C_\beta(\D;P,Q)$ tends to
		$ \beta H^{-1} = \frac{\beta}{N}F^{-1}$ (this is in accordance with the Cram\'er-Rao bound);

		\item the information in the weights is given by
		\[ \KL{Q(w|\D)}{P(w)} = \frac{1}{2} \log |F| + \frac{1}{2} k \log \lambda^2 + O(1). \qedhere \]
	\end{itemize}
\end{thm}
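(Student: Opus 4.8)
The plan is to turn the functional $C_\beta(\D;P,Q)$ into an explicit scalar function of the covariance $\Sigma$ and then minimize it. Both terms are tractable: the $\KL$ term because $P$ and $Q$ are Gaussian, and the expected reconstruction error after a second-order expansion of $L_\D$ about the local minimum $w^*$.

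First I would expand the loss. Because $w^*$ is a local minimum, $\nabla L_\D(w^*)=0$, so $L_\D(w) = L_\D(w^*) + \half (w-w^*)^\top H (w-w^*) + o(\|w-w^*\|^2)$ with $H = \nabla^2 L_\D(w^*)$. Taking the expectation over $w\sim N(w^*,\Sigma)$ and using $\E[(w-w^*)^\top H(w-w^*)] = \tr(H\Sigma)$ gives $\E_{w\sim Q}[L_\D(w)] = L_\D(w^*) + \half\tr(H\Sigma)$ up to higher-order terms. The Gaussian $\KL$ has the standard closed form
\[ \KL{Q}{P} = \half\Big( \tfrac{1}{\lambda^2}\tr\Sigma + \tfrac{1}{\lambda^2}\|w^*\|^2 - k + k\log\lambda^2 - \log|\Sigma| \Big). \]
Letting $\lambda\to\infty$ kills the two $1/\lambda^2$ terms, so the only $\Sigma$-dependent parts of $C_\beta$ that survive are $\half\tr(H\Sigma)$ from the loss and $-\tfrac{\beta}{2}\log|\Sigma|$ from the $\KL$.

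Next I would minimize $g(\Sigma) = \half\tr(H\Sigma) - \tfrac{\beta}{2}\log|\Sigma|$ over positive-definite $\Sigma$. This is convex (a linear term plus $-\log\det$) and coercive, so its unique stationary point is the global minimum; the identities $\partial_\Sigma\tr(H\Sigma)=H$ and $\partial_\Sigma\log|\Sigma|=\Sigma^{-1}$ force $\Sigma^{*-1}$ to be proportional to $H$, so $\Sigma^*$ is a fixed scalar multiple of $H^{-1}$, namely the $\tfrac{\beta}{2}H^{-1}$ of the statement (the precise constant depends only on the normalization of $H$ relative to $F$, and enters the information formula below solely through $O(1)$). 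Invoking the Hessian--Fisher relation $H = N F$ for the summed loss rewrites this as $\tfrac{\beta}{2N}F^{-1}$, the Cram\'er--Rao form. Substituting $\Sigma^*$ into the surviving $\KL$ terms leaves $\half k\log\lambda^2 - \half\log|\Sigma^*|$; since $\log|\Sigma^*| = -\log|H| + \mathrm{const} = -\log|F| - k\log N + \mathrm{const}$ and $N$ (like $k$, $\beta$, $\|w^*\|$) is fixed for the given task, all terms except $\half\log|F|$ and $\half k\log\lambda^2$ collapse into $O(1)$, yielding the claimed information formula.

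The step I expect to be the main obstacle is justifying the quadratic truncation inside the expectation: $Q$ has full support, so $\E_{w\sim Q}[L_\D(w)]$ genuinely depends on $L_\D$ far from $w^*$, whereas the quadratic model is only locally accurate. The resolution is self-consistency---the optimizer $\Sigma^*\sim H^{-1}\sim 1/N$ concentrates as the Hessian grows, so the posterior mass leaving the quadratic basin is negligible and the neglected $o(\|w-w^*\|^2)$ contributes only to $O(1)$, provided $L_\D$ grows slowly enough (\eg, polynomially) to be dominated by the Gaussian tails. One also needs $H\succ 0$ so that $H^{-1}$ exists and the quadratic has a genuine minimum; this is where the non-degeneracy of the local minimum $w^*$ enters.
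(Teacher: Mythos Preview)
Your approach is essentially the same as the paper's: write the Gaussian $\KL$ in closed form, replace $L_\D$ by its second-order Taylor expansion about $w^*$ so that the expected loss contributes a $\tr(H\Sigma)$ term, differentiate in $\Sigma$ to obtain $\Sigma^*\propto H^{-1}$, and invoke the identification $H=NF$ at the critical point. The only minor procedural difference is that the paper keeps $\lambda$ finite during the minimization---obtaining $\Sigma^* = \tfrac{\beta}{2}\bigl(H+\tfrac{\beta}{2\lambda^2}I\bigr)^{-1}$---and sends $\lambda\to\infty$ only afterward, whereas you take the limit first; the paper also simply \emph{assumes} the quadratic approximation is valid in a sufficiently large neighborhood rather than discussing its justification as you do.
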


\noindent Recalling that the Fisher Information $F$ measures the local curvature, this proposition confirms the qualitative discussion at the beginning of \Cref{sec:information-in-the-weights}: The optimal covariance $\Sigma^* \propto H^{-1}$ gives high variability to the directions of low curvature, which are ``less informative,'' whereas it gives low variability to the ``more informative'' directions of high curvature.
The Fisher Information describes the information  contained in the weights about the dataset. In \Cref{sec:generalized-distance} we discuss how to compute it.

\subsection{Connections with the PAC-Bayes Bound}
\label{sec:pac-bayes}

The Lagrangian $C_\beta(\D; P, Q)$ admits another interpretation as an upper-bound to the test error, as shown by the PAC-Bayes test error bound:

\begin{thm}[{\cite[Theorem 2]{mcallester2013pac}}]
Assume the dataset $\D=\{(x_i, y_i)\}_{i=1}^N$ is sampled i.i.d.\ from a distribution $p(y,x)$, and assume that the per-sample loss used to train is bounded by $L_\text{max} = 1$ (we can reduce to this case by clipping and rescaling the loss).
For any fixed $\beta>1/2$,  pre-distribution $P(w)$, and  post-distribution $Q(w|\D)$, with probability at least $1-\delta$ over the sample of $\D$, we have:
\begin{align}
\label{eq:pac-bayes-bound}
L_{\text{test}}(Q) &\leq \frac{1}{N(1-\frac{1}{2\beta})} \Big[\E_{w \sim Q(w|\D)}[L_\D(p_w)] + \beta \KL{Q}{P} + \beta \log \frac{1}{\delta}  \Big] \nonumber \\
&= \frac{1}{N(1-\frac{1}{2\beta})} \Big[C_{\beta}(\D; P, Q) + \beta  \log \frac{1}{\delta}\Big]. \nonumber
\end{align}
where  $L_\text{test}(Q) := \E_{x,y \sim p(x,y)} [\E_{w\sim Q}[p_w(y|x)]]$ is the expected per-sample test error that the model incurs using the weight distribution $Q(w|\D)$.
\end{thm}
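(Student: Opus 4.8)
The plan is to reproduce the classical PAC-Bayes argument, whose three ingredients are a change-of-measure inequality, Markov's inequality over the random draw of $\D$, and a per-sample exponential-moment bound that exploits the boundedness of the loss; the specific constant $\frac{1}{1-\frac{1}{2\beta}}$ will emerge from calibrating the free parameter in the exponent to $\beta$. Write $\ell(w,z)\in[0,1]$ for the (clipped and rescaled) per-sample loss of weights $w$ on a sample $z=(x,y)$, so that $L_{\text{test}}(Q)=\E_{w\sim Q}\E_z[\ell(w,z)]$ and $L_\D(p_w)=\sum_i \ell(w,z_i)$; denote the per-$w$ true and empirical losses by $L(w)$ and $\hat L(w)=\frac1N\sum_i \ell(w,z_i)$. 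Fix $\alpha := 1-\frac{1}{2\beta}$ (positive since $\beta>\tfrac12$) and define $g(w):=\frac{N}{\beta}\bigl(\alpha L(w)-\hat L(w)\bigr)$. The goal reduces to showing that, with probability $\ge 1-\delta$ over $\D$, one has $\E_{w\sim Q}[g(w)]\le \KL{Q}{P}+\log\frac1\delta$ for every $Q$, since rearranging this is exactly \cref{eq:pac-bayes-bound}.

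First I would invoke the Donsker--Varadhan / compression lemma: for the prior $P$ and any posterior $Q$ (in particular the data-dependent $Q(w\mid\D)$), $\E_{w\sim Q}[g(w)]\le \KL{Q}{P}+\log\E_{w\sim P}[e^{g(w)}]$. This inequality holds for each fixed $\D$ and, crucially, for all $Q$ simultaneously, which is what will ultimately make the bound uniform over the choice of posterior.

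Next I would control $\E_{w\sim P}[e^{g(w)}]$ with high probability over $\D$. Since $P$ is chosen before the data, Tonelli lets me swap expectations: $\E_\D\E_{w\sim P}[e^{g}] = \E_{w\sim P}\E_\D[e^{g}]$. For a fixed $w$ the samples are i.i.d., so $\E_\D[e^{g(w)}]=e^{\frac{N}{\beta}\alpha L(w)}\prod_i \E[e^{-\frac1\beta \ell(w,z_i)}]$. The key per-sample step is the chord (convexity) bound $\E[e^{-s\ell}]\le 1-(1-e^{-s})L(w)$ for $s=1/\beta$, valid because $\ell\in[0,1]$ and $x\mapsto e^{-sx}$ lies below its chord on $[0,1]$. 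Combining $\log(1-u)\le -u$ with the elementary estimate $1-e^{-1/\beta}\ge \tfrac1\beta-\tfrac{1}{2\beta^2}=\alpha/\beta$ (which is $e^{-x}\le 1-x+\tfrac{x^2}{2}$ at $x=1/\beta$) makes the $L(w)$-dependent terms cancel, giving $\log\E_\D[e^{g(w)}]\le \frac{N}{\beta}\alpha L(w)-N(1-e^{-1/\beta})L(w)\le 0$, hence $\E_\D[e^{g(w)}]\le 1$ for every $w$ and so $\E_\D\E_{w\sim P}[e^{g}]\le 1$. Markov's inequality then yields $\P_\D\bigl[\E_{w\sim P}[e^{g}]\ge \tfrac1\delta\bigr]\le\delta$, so with probability at least $1-\delta$ we have $\log\E_{w\sim P}[e^{g}]\le \log\tfrac1\delta$.

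Finally I would combine the two displays: on this high-probability event, $\frac{N}{\beta}\bigl(\alpha\,\E_Q[L]-\E_Q[\hat L]\bigr)=\E_Q[g]\le \KL{Q}{P}+\log\tfrac1\delta$ for all $Q$, and rearranging (multiply by $\beta/N$, move $\E_Q[\hat L]$ to the right, divide by $\alpha$) reproduces \cref{eq:pac-bayes-bound}, with the second equality following from the definition of $C_\beta(\D;P,Q)$. The main obstacle, and the only place boundedness is used, is the per-sample moment bound: the exponent must be tuned exactly as $g(w)=\frac N\beta(\alpha L-\hat L)$ so that the chord bound together with the quadratic estimate $1-e^{-x}\ge x-\tfrac{x^2}{2}$ forces $\E_\D[e^{g}]\le1$ with precisely the constant $\alpha=1-\frac{1}{2\beta}$. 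The one conceptual point worth stating carefully is that the randomness lives only in $\D$: a single high-probability event controls $\log\E_P e^{g}$, and the always-valid change of measure then delivers the bound for every posterior $Q$ at once.
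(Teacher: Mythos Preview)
The paper does not supply its own proof of this theorem: it is quoted verbatim from McAllester (the citation in the theorem header) and used as a black box. Your proposal correctly reproduces the standard PAC-Bayes derivation underlying that reference --- Donsker--Varadhan change of measure, a per-sample exponential-moment bound via the chord inequality for $e^{-s\ell}$ on $[0,1]$, and Markov over the draw of $\D$ --- and the calibration $\alpha=1-\tfrac{1}{2\beta}$ is exactly what makes the moment bound close; there is nothing to add or correct.
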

\noindent Hence, we see that minimizing the Lagrangian $C_\beta(\D; P, Q)$ can be interpreted as minimizing an upper-bound on the test error of the model, rather than directly minimizing the train error.
This is in accordance with the intuition developed earlier, that minimizing $C_\beta(\D; P, Q)$ forces the model to capture the  structure of the data. It is also interesting to consider the following bound on the expectation over the sampling of $\D$ (\cite[Theorem 4]{mcallester2013pac}):
\[
\E_\D [L_{\text{test}}(Q(w|\D))] \leq \frac{1}{N(1-\frac{1}{2\beta})} \Big[\E_\D [L_\D(Q(w|\D))] + \beta \, \E_\D[ \KL{Q(w|\D)}{P}] \Big].
\]
As we have seen in \Cref{prop:shannon-complexity-equivalence}, for the optimal choice of pre-distribution $P$ minimizing the bound, we have $\E_\D [\KL{Q}{P}] = I(w; \D)$.
Hence, the Shannon Information that the weights of the model have about the dataset $\D$ is the measure of complexity that gives (on expectation) the strongest generalization bound.
This has also been noted in \cite{achille2017emergence}.
In \cite{dziugaite2017computing}, a non-vacuous generalization bound is computed for DNNs, using (non-centered and non-isotropic) Gaussian prior and posterior distributions.

\section{Generalized distance, reachability, and learnability of tasks}
\label{sec:generalized-distance}

Unlike $C_\beta(\D)$, the definition of  $C_\beta(\D;P,Q)$ in \cref{eq:kl-complexity} captures the complexity of a dataset for a particular model class and training algorithm. Motivated by this, we now  define a distance between datasets which is tailored to the model.

Throughout this section, fix a parametrized model class $p_w(y|x)$, a pre-distribution $P(w)$, and a class $\mathcal{Q}$ of post-distributions $Q(w|\D)$.
The case we are most interested in is that of DNNs, with an uninformative pre- and a Gaussian post-distributionr (see \Cref{sec:complexity-measures}).
Our starting point is to generalize Kolmogorov's Structure Function.
Consider the following generalized Structure Function:
\[
S_\D(t) = \, \min_{\KL{Q(w|\D)}{P(w)} \leq t} \, \E_{w\sim Q(w|\D)}[L_\D(p_w)].
\]
Here the minimum is taken among all post-distributions $Q(w|\D)$ in the chosen class $\mathcal{Q}$.
Similarly to what we have seen in \Cref{sec:task-lagrangian}, this minimization problem has $C_\beta(\D;P,Q)$ as its associated Lagrangian.
We say that $Q(w|\D)$ is a $\beta$-sufficient statistic if it minimizes $C_\beta(\D;P,Q)$. It is a $\beta$-minimal sufficient statistic if it also minimizes $\KL{Q(w|\D)}{P(w)}$.
Motivated by \Cref{cor:union-distance}, we then introduce the following distance.

\begin{definition}
\label{def:kl-distance}
The task distance from $\D_1$ to $\D_2$ at level $\beta$ is
\[
  d_\beta(\D_1 \to \D_2) = \max_{Q_{1}} \min_{Q_{12}} \KL{Q_{12}(w|\D)}{P(w)} - \KL{Q_1(w|\D)}{P(w)},
\]
where $Q_{12}(w|\D)$ is a $\beta$-minimal sufficient statistic for $\D_1 \sqcup \D_2$, and $Q_1(w|\D)$ is a $\beta$-minimal sufficient statistic for $\D_1$.
\end{definition}

While more general and amenable to computation than the distance of \Cref{def:kolmogorov-distance}, this is not an actual distance as it lacks several of the properties in \Cref{lemma:distance}.
Nonetheless, we will show that $d_\beta$ does indeed capture the difficulty of fine-tuning from one dataset to another using specific model families (such as DNNs) and training algorithms (SGD), and empirically shows good correlation with other distances (\eg, taxonomical), when those are defined.

\subsection{Local Task Reachability}

Until now, we have only considered global minimization problems, where we aim to find the best solution satisfying some optimal trade-off. However, many learning algorithm (\eg, SGD) are local: Starting from the current solution, they take a small step to greedily minimize some objective function. This raises the question of which conditions allow such an algorithm to recover an optimal solution.

Given a distribution $Q(w|\D) \in \mathcal{Q}$, denote by $L_\D(Q)$ the expected loss $\E_{w \sim Q(w|\D)}[L_\D(p_w)]$.
Fix a distance $d$ on $\mathcal{Q}$ such that both $L_\D(Q)$ and $\KL{Q}{P}$ are continuous, as functions $\mathcal{Q} \to \R$.
This way, the Lagrangian $C_\beta(\D; P, Q) = L_\D(Q) + \beta \KL{Q}{P}$ is continuous in the joint variable $(Q, \beta)$.
In the case of DNNs, where $P$ is uninformative and $\mathcal{Q}$ is the class of the Gaussian distributions $Q(w|\D) \sim N(w^*, \Sigma)$, we can take for example $d$ as the Wasserstein distance, or the Euclidean distance between the parameters $(w^*, \Sigma)$.

\begin{definition}[$\epsilon$-local learning algorithm]
  \label{def:local-learning-algorithm}
  Fix $\beta \geq 0$.
  We say that a step is $\epsilon$-\emph{local} if, starting from a given statistic $Q_0$, it finds the statistic $Q$ that minimizes
  \[
  C_\beta(\D; P,Q) = L_\D(Q) + \beta \KL{Q}{P}
  \]
  and such that $d(Q, Q_0) \leq \epsilon$.
  We say that a learning algorithm is $\epsilon$-\emph{local} if it only takes $\epsilon$-local steps.
\end{definition}

In the limit $\epsilon \to 0$, this reduces to gradient descent on the Lagrangian $C_\beta$.
Notice however that this is not the same as performing gradient descent on the cross-entropy loss $L_\D$, unless $\beta=0$.
Indeed, minimizing $L_\D$, unlike minimizing $C_\beta$ (\Cref{sec:pac-bayes}), gives no guarantees on the performance on test data, as the learning algorithm could simply memorize the dataset $\D$.

We will show in the next section than a DNN trained with SGD can actually be interpreted as a local learning algorithm minimizing $C_\beta$. Common methods of training DNNs also rely on annealing of the learning rate: training starts with a high learning rate which is anneals gradually.
As we will make precise in the next section, this corresponds to starting with a high value of $\beta$, and gradually decreasing it to a final value $\bar\beta$.
This helps avoiding degenerate solutions, because the model starts by favouring structure over memorization. We introduce the following definition to capture the role of annealing.

\begin{definition}[$\epsilon$-local learning algorithm with annealing]
An $\epsilon$-local learning algorithm with annealing is a learning algorithm that alternates $\epsilon$-local steps (that change the distribution $Q$) and annealing steps (that can decrease the value of $\beta$).
\end{definition}

Notice that, if the annealing is slow, then an $\epsilon$-local learning algorithm with annealing can be regarded as a discrete gradient descent with respect to the joint variable $(Q, \beta)$.
A natural question is: Does an $\epsilon$-local learning algorithm always recover a global minimum? The following result gives a sufficient condition for an $\epsilon$-local learning algorithm with annealing to recover the global minimum of $C_{\bar\beta}$. 

\begin{prop}
\label{prop:local-learning-algorithm}
Fix an annealing schedule $\beta_0 \geq \beta_1 \geq \dotsb \geq \beta_n = \bar\beta$.
Suppose that, for every global minimizer $Q_i$ of $C_{\beta_i}$, there exists a global minimizer $Q_{i+1}$ of $C|_{\beta_{i+1}}$ with $d(Q_i, Q_{i+1}) \leq \epsilon$.
Then, an $\epsilon$-local learning algorithm with annealing that starts from a global minimizer $Q_0$ of $C|_{\beta_0}$, and performs one $\epsilon$-local step after each annealing step, computes a global minimizer of $C_{\bar\beta}$.
\end{prop}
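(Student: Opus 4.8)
The plan is to argue by induction on the annealing index $i$, maintaining the invariant that the statistic held by the algorithm immediately after the $\epsilon$-local step performed at temperature $\beta_i$ is itself a global minimizer of $C_{\beta_i}$. Write $\tilde Q_i$ for this iterate. The base case is immediate: by assumption the algorithm starts at $\tilde Q_0 = Q_0$, which is a global minimizer of $C_{\beta_0}$.

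For the inductive step, suppose $\tilde Q_i$ is a global minimizer of $C_{\beta_i}$. The annealing step replaces $\beta_i$ by $\beta_{i+1}$ without moving the statistic, so the algorithm now sits at $\tilde Q_i$ while evaluating the new objective $C_{\beta_{i+1}}$. First I would invoke the hypothesis of the proposition: since $\tilde Q_i$ is a global minimizer of $C_{\beta_i}$, there exists a global minimizer $Q^\star_{i+1}$ of $C_{\beta_{i+1}}$ with $d(\tilde Q_i, Q^\star_{i+1}) \leq \epsilon$; that is, a genuine global optimum of the new objective lies inside the closed $\epsilon$-ball centered at the current iterate.

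The crux is then to combine this reachability with the defining property of an $\epsilon$-local step (\Cref{def:local-learning-algorithm}): such a step returns the \emph{minimizer} of $C_{\beta_{i+1}}$ over the entire ball $\{Q : d(Q, \tilde Q_i) \leq \epsilon\}$, not merely a descent direction. Because $Q^\star_{i+1}$ is feasible for this constrained problem, the value attained by the output $\tilde Q_{i+1}$ of the step satisfies $C_{\beta_{i+1}}(\tilde Q_{i+1}) \leq C_{\beta_{i+1}}(Q^\star_{i+1})$; but $Q^\star_{i+1}$ is a global minimizer, so the reverse inequality holds as well. Hence $\tilde Q_{i+1}$ attains the global minimum value and is itself a global minimizer of $C_{\beta_{i+1}}$, closing the induction. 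Iterating through $i = n$ gives that $\tilde Q_n$ is a global minimizer of $C_{\beta_n} = C_{\bar\beta}$, as claimed.

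I expect the only real subtlety to be definitional rather than mathematical. The whole argument hinges on reading the $\epsilon$-local step as an \emph{exact} constrained global minimization over the $\epsilon$-ball, so I would state explicitly the well-posedness of that inner problem --- attainment of the minimum, which follows from the continuity of $C_\beta$ assumed at the start of the section together with compactness of the closed ball. One should also be careful that the iterate $\tilde Q_{i+1}$ produced by the step need not coincide with the witness $Q^\star_{i+1}$ supplied by the hypothesis; only their objective values must agree, which is exactly why the invariant is phrased as \emph{being} a global minimizer rather than tracking one particular optimizer. It is worth noting that the monotonicity $\beta_0 \geq \dotsb \geq \beta_n$ of the schedule plays no direct role in the induction: all of the work is done by the per-step reachability hypothesis, which implicitly encodes that consecutive temperatures are close enough for a single $\epsilon$-step to bridge their optima.
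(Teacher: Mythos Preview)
Your proposal is correct and follows essentially the same inductive argument as the paper's proof: maintain that the current iterate is a global minimizer of $C_{\beta_i}$, use the hypothesis to place a global minimizer of $C_{\beta_{i+1}}$ inside the $\epsilon$-ball, and conclude that the $\epsilon$-local step lands on a global minimizer of $C_{\beta_{i+1}}$. Your version is more explicit than the paper's (in particular, your remark that the step output need only match the witness in objective value, and your observations about attainment and the unused monotonicity of the schedule), but the underlying approach is identical.
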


We say that a task that satisfies the conditions of \Cref{prop:local-learning-algorithm} for some annealing schedule is \emph{$\epsilon$-connected}.
However, in general we cannot guarantee that an $\epsilon$-local learning algorithm  cannot get  stuck in a local minimum of $C_{\bar\beta}$.
For example, this is bound to happen if there is no sequence $(Q_0, \beta_0), (Q_1, \beta_1), \dotsc, (Q_n, \beta_n) = (\bar Q, \bar\beta)$ such that $\bar Q$ is a global minimum of $C_{\bar\beta}$, $d(Q_i, Q_{i+1}) \leq \epsilon$, and $C_{\beta_i}(\D; P, Q_i) \leq C_{\beta_{i+1}}(\D; P, Q_{i+1})$ for all $i$. In particular, this happens if there is no continuous path of global minima, which is an intrinsic property of the task $\D$ and loss function with respect to the function class.

\subsection{SGD as a local learning algorithm}
\label{sec:task-accessibility}

So far, we have introduced an abstract notion of distance between tasks.
However, we have not yet shown that this notion is useful for DNNs, or that indeed SGD is a local algorithm in the sense of \Cref{def:local-learning-algorithm}.

In \cite{achille2018dynamics}, a step in this direction is taken.
It is shown that, to first approximation, the probability of SGD converging to a configuration $w_f$ solving task $\D$ in a given time $t_f - t_0$, starting from a configuration $w_0$, is given by:
\begin{equation}
\label{eq:transition-probability}
p(w_f,t_f|w_0, t_0) \simeq \utext{\vphantom{\int_{w_0}^{w_f}}e^{-\frac{1}{2T} \Delta (L_{\D}(w) + \beta \KL{Q}{P})}}{static part} \ \utext{\int_{w_0}^{w_f}  e^{-\frac{1}{2D}\int_{t_0}^{t_f} \frac{1}{2} \dot{u}(t)^2 + L_\D(u(t)) \d t} \d u(t)}{dynamic part},
\end{equation}
when using the pre-distribution $P(w) \sim N(0, \lambda^2 I)$, the optimal Gaussian post-distribution $Q(w|\D) \sim N(w^*, \Sigma^*)$, and $\beta = 2 \lambda^2 \gamma T$.
Here $\Delta f(w) := f(w_f) - f(w_0)$,  $\gamma$ is the weight decay coefficient used to train the network, and $T \propto \eta/B$ is a temperature parameter that depends on the learning rate $\eta$ and the batch size $B$.

From \cref{eq:transition-probability} we see that, with high probability, SGD takes steps that minimize the effective potential $C_\beta(\D; P, Q) = L_{\D}(w) + \beta \KL{Q}{P})$ (static part), while trying to minimize the distance traveled in the given time (dynamic part). Hence, SGD can be seen as a stochastic implementation of a local learning algorithm in the sense of \Cref{def:local-learning-algorithm}.

In particular, this has the non-intuitive implication that SGD does not greedily optimize the loss function as it may seem from its update equation.
Rather, on average, the complexity of the recovered solution affects the dynamics of the optimization process, and changes the objective.\footnote{This was also observed by \cite{chaudhari2018stochastic} although derived for the continuous approximation of SGD as a stochastic differential equation.} 
Therefore, the complexity we have introduced in \Cref{sec:information-in-the-weights} is not simply an abstract means to study different trade-offs, but rather plays a concrete role in the learning dynamics.

Since SGD can be seen as a local learning algorithm, annealing the learning rate during training (\ie, annealing the parameter $\beta \propto T$) can be interpreted as a way of learning the structure of a task by slowly sweeping the Structure Function.
Hence, SGD with annealing adaptively changes the complexity of the model, even if its dimensionality is fixed at the outset. This creates non-trivial dynamics that turn out to be beneficial to avoid memorization. It also points to the importance of the initial transient of learning during the annealing, a prime area for investigation beyond the asymptotics of the particular value of the weights, or the local structure of the residual around them, at convergence. Another consequence is that, when the task is not $\epsilon$-connected, SGD may fail to recover the optimal structure of the data.

One may wonder if there are examples of simple tasks which are severely non-$\epsilon$-connected, and if SGD actually fails to solve them. Biology provides such an example. \cite{achille2017critical} show that there exists tasks -- a dataset of images, and the same dataset with the same images slightly blurred  -- which are close to each other in any intuitive sense (the taxonomical distance is zero), yet a network trained on the one cannot be fine-tuned to solve the other.
This phenomenon, observed in diverse tasks and networks, across biological and artificial systems, is shown to be related to changes undergone by the Fisher Information Matrix during training. Within our framework, this can be interpreted as biasing the initial optimization process toward a  minimum of $C_\beta(\D', P, Q)$ for the blurred task $\D'$ which is not close to any of the global minima, despite  being a local minimum of the Lagrangian $C_\beta(\D, P, Q)$. Hence, the local learning algorithm, rather than learning the structure of the data, performs sub-optimal greedy choices to optimize the loss.

Even though the static term of \cref{eq:transition-probability} is high (\ie, the distance is small), this example indicates that that the dynamic term is not negligible. This suggests additional investigations are needed into the  dynamics of differential learning.

\section{Empirical validation}
\label{sec:experiments}

\begin{figure}
	\centering
    \includegraphics[width=.30\linewidth]{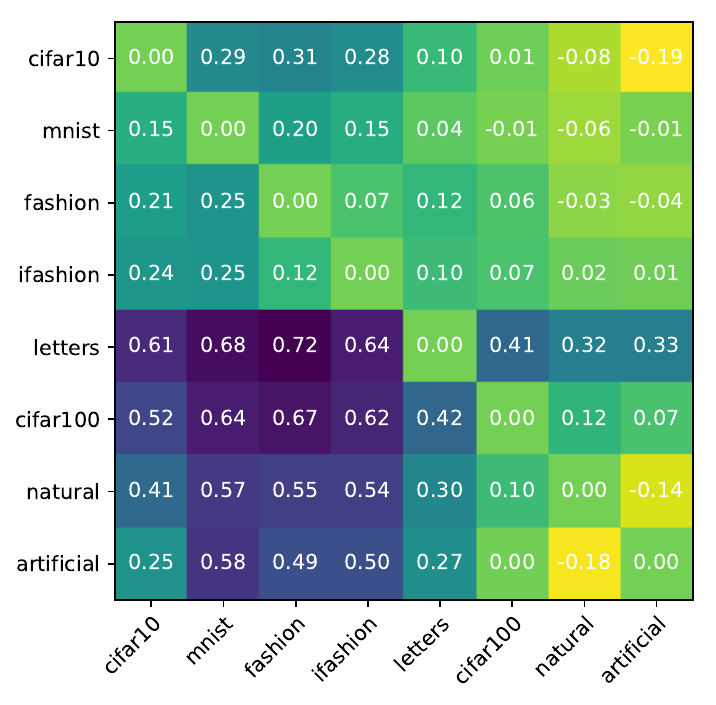}
    \hspace{1.5cm}
    \includegraphics[width=.35\linewidth]{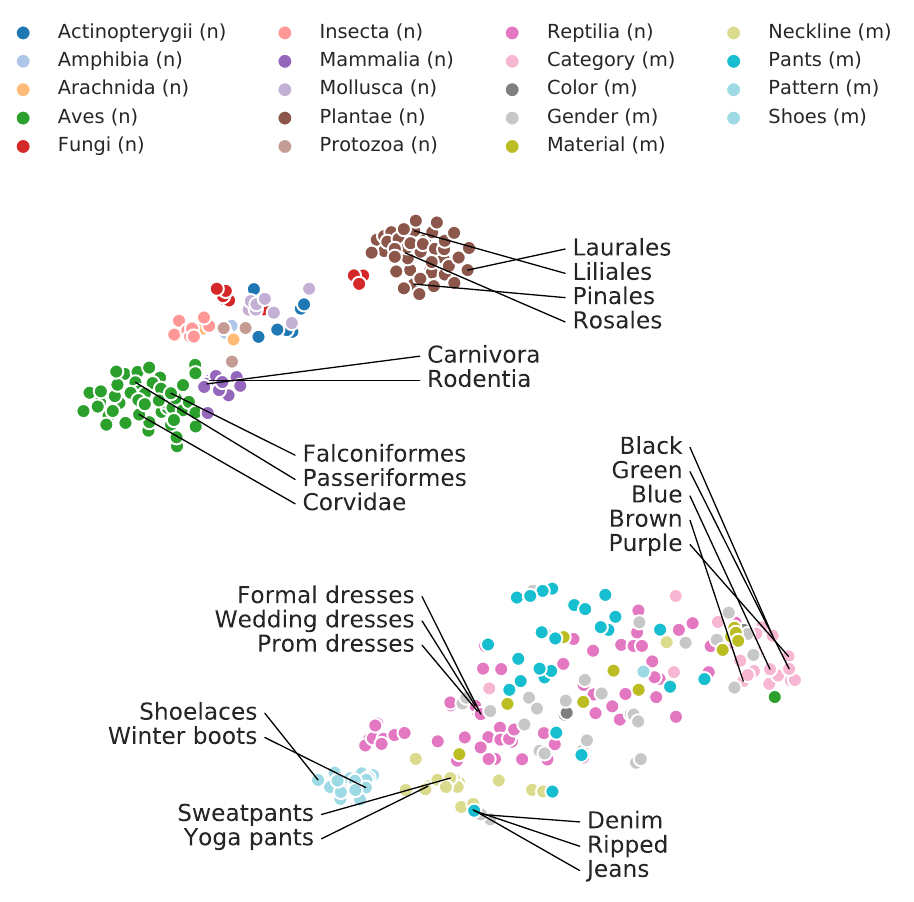}
	\caption{
	\textbf{(Left)} \textbf{Estimated distance matrix between several tasks.}
	Each entry shows the distance $d_\beta(\D_1 \to \D_2)$ going from the column task $\D_1$ to the row task $\D_2$.
    Going from a complex task like CIFAR-100 to a simpler task (like MNIST) is always easier than the converse.
	Subtasks are close to the full tasks (\eg, the subset of ``artificial'' and ``natural'' objects of CIFAR-100 are both close to CIFAR-100).
	Similar tasks on the domain of small black and white images (Fashion MNIST, MNIST, Letters) are also closer together than to natural images. Inverting the colors on Fashion  images leads to a very similar task (I-Fashion), as expected.
	\textbf{(Right)} \textbf{T-SNE embedding} of several organism species classification tasks and clothing attributes classification tasks based on their distance, reproduced from \cite{achille2019task2vec}, which uses a similar definition of task distance based on the Fisher Information. Intuitively, similar tasks cluster together. In the case of species classification, this largely follows the taxonomical structure.
	}
\label{fig:distance}
\end{figure}

The theoretical framework developed in this paper has tangible ramifications.
A robust version of the asymmetric distance between tasks described here has been used in \cite{achille2019task2vec}, to create a metric embedding of hundreds of real-world datasets (\Cref{fig:distance}, right).
The structure of the embedding shows good accordance with the complexity of the tasks (\Cref{fig:complexity-structure}), and both with intuitive notions of similarity and with other metrics, such as taxonomical distance, when those are available (\Cref{fig:distance}).
The metric embedding allows tackling important meta-tasks, such as predicting the performance of a pre-trained model on a new datasets, given its performance on past similar datasets.
This greatly simplifies the process of recommending a pre-trained model as the starting point for fine-tuning a new task, which otherwise would require running a large experiment to determine the best model.
The results are also shown to improve compared to pre-training a model on the most complex available task, \ie, full ImageNet classification.

\begin{figure}
    \centering
    \includegraphics[width=.8\linewidth]{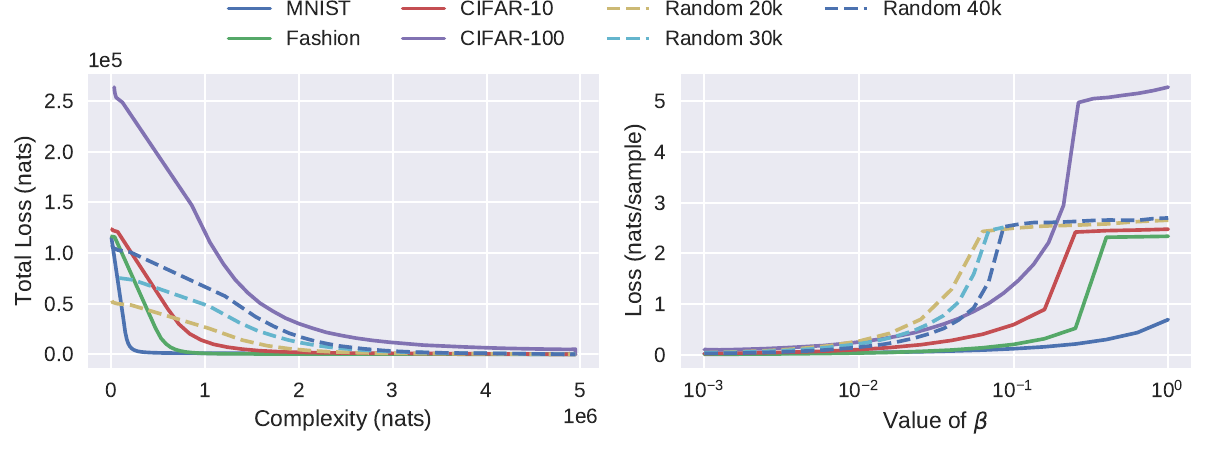}
    \caption{
    We train an AllCNN model \cite{springenberg2014striving} by minimizing the lagrangian $C_\beta(\D;P,Q)$ of \cref{eq:kl-complexity}, for different values of $\beta$.
    We carry out this experiment on four real datasets (MNIST, Fashion-MNIST, CIFAR-10, CIFAR-100) and three artificial datasets of different size ($20$k, $30$k, $40$k samples) where labels are assigned randomly.
    In all experiments, we first pretrain for $180$ epochs with weight decay $5\cdot 10^{-4}$, batch size $128$, and learning rate $0.05$ (reduced by a factor of $10$ after $80$ and $120$ epochs), by minimizing the cross-entropy loss $L_\D$.
    Then we fine-tune for $5$ epochs, with a learning rate of $0.001$ for the batch normalization parameters and for the parameters of the final classifier, and of $0.1$ for all other parameters.
    In each epoch of fine-tuning, only $10$k random samples from the training set are used.
    \textbf{(Left)} Approximation of the Structure Function $S_\D(t)$, obtained by plotting the total loss $C_\beta(\D;P,Q)$ against the information in the model parameters.
    Notice that the loss on simpler datasets rapidly decreases as we increase the complexity of the model. On the other hand, for random labels, the decrease in error is much slower and follows an almost linear trend, as described in \Cref{ex:structure-function-random-labels}.
    \textbf{(Right)} Plot of the cross-entropy loss $L_\D$ as a function of $\beta$ (which controls the trade-off between complexity and error).
    All datasets show a sharp transition between overfitting and underfitting at some value of $\beta$.
    Simpler datasets can still be fitted for a high $\beta$, since models of low complexity can already correctly classify the data.
    On the other hand, more complex datasets need a very low $\beta$ to fit the data: In particular, random labels have the worst trade-off.
    The transition happens at a value that depends on the complexity of the data distribution, and is mostly independent of the dataset size: the random labeled datasets all transition at a similar point, despite the difference in size.
    }
    \label{fig:complexity-structure}
\end{figure}

\begin{figure}
    \centering
    \includegraphics[width=.8\linewidth]{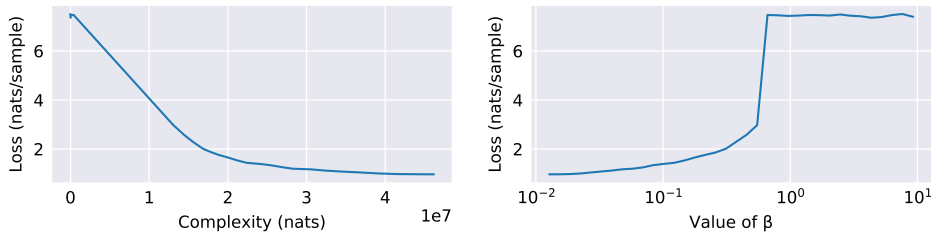}
    \caption{The same experiment as in \Cref{fig:complexity-structure}, on the ImageNet dataset \cite{deng2009imagenet} and using a pretrained ResNet-34 \cite{he2016deep}.}
    \label{fig:complexity-structure-imagenet}
\end{figure}

Besides these independent validations, we also report some additional experiments that illustrate the concepts introduced in this paper.
In \Cref{fig:complexity-structure} (right), we plot the loss as a function of $\beta$.
As predicted by \Cref{prop:worst-tradeoff}, the Lagrangian $C_\beta(\D; P,Q)$ exhibits sharp ``phase transitions'' at a critical value $\beta^*$, when the model transitions from fitting the dataset with a trivial uniform distribution to actually fitting the labels. Regardless of their size, datasets of random labels transition around the same value $\beta^*$.
The other datasets transition at a higher value, which depends on their complexity.
For example, a complex dataset such a CIFAR-100 transitions at a much lower $\beta^*$ than a simple dataset such as MNIST.
Notice that, in this experiment, the critical value for random labels is not $\beta^*=1$: This is because the complexity is computed using an uninformative prior (see \Cref{sec:complexity-measures}), and not the universal prior as in \Cref{prop:worst-tradeoff}.
In \Cref{fig:complexity-structure} (left), as the complexity increases, we see that the total loss for different datasets drops to zero with different rates.
This reflects the fact that MNIST, Fashion-MNIST, CIFAR-10, and CIFAR-100 are in increasing order of difficulty.
\Cref{fig:complexity-structure-imagenet} illustrates the results of the same experiment on the (much larger) ImageNet dataset \cite{deng2009imagenet}.

\section{Discussion and Open Problems}
\label{sec:discussion}

The modern practice of Deep Learning is rich with success stories where high-dimensional parametric models are trained on large datasets, and then adapted (fine-tuned) to specific tasks.
This process requires considerable effort and expertise, and few tools are available to predict whether a given pre-trained model will succeed when fine-tuned for a different task.
We have started developing a language to reason about transfer learning in the abstract, and analytical tools that allow to predict the success of transfer learning.

The first step is to define the tasks and the space they live in.
We take a minimalistic approach, and identify a task with a finite dataset.
The second step is to endow the space of tasks with a metric.
This is non-trivial, since different datasets can have a different cardinality and dimension, and one needs to capture the fact that a simple task is usually ``closer'' to a complex one than vice-versa (in the sense that it is easier to fine-tune a model from a complex task to a simpler one).
Thus, a notion of complexity of a learning task needs to be defined first.

We introduce a notion of complexity and a notion of distance for learning tasks that are very general, and encompass well-known analogues in Kolmogorov complexity theory and Information Theory as special cases.
They espouse characteristics of Kolmogorov's framework (focusing on finite datasets rather than asymptotics) with the intuitive notions and ease of computation of Information Theory.
We use deep neural networks to compute information quantities.
On one hand, this provides a convenient way of instantiating our general theory.
On the other hand, it allows to measure the complexity of a Deep Network, and to reason about generalization (learning vs.\ memorization) using a non-asymptotic language, in terms of quantities that are measurable from finite datasets.

Our theory exposes connections between Deep Learning, Complexity Theory, and Information Theory, and PAC-Bayes theory, but more work is needed to fully develop these connections: The (static) distance we introduce only gives a lower bound to the feasibility of transfer learning with deep neural networks, and \emph{dynamics} plays an important role as well.
In particular, there are tasks that are very close, yet there is no likely path between them, so fine-tuning can fail.
This is observed broadly across different architectures and optimization schemes, but also across different species in biology, pointing to fundamental complexity and information phenomena yet to be fully unraveled.

\begin{figure}
    \centering
    \includegraphics[width=.35\linewidth, height=4.5cm]{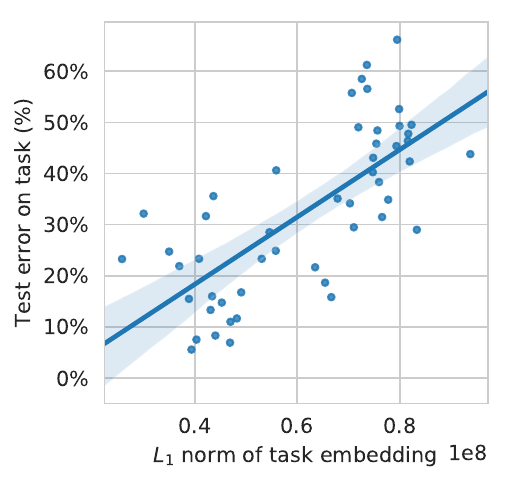}
    \caption{\textbf{Correlation between test error and the trace of the Fisher Information Matrix on several tasks} (reproduced from \cite{achille2019task2vec}).
    The Fisher Information Matrix emerges as a complexity measure when using an uninformative prior (\Cref{sec:complexity-measures}).
    This plot shows that that the FIM trace correlates with the error obtained on the task, and so the FIM is indeed a sensible measure of the complexity of a task.
    }
    \label{fig:fisher-norm}
\end{figure}

\subsubsection*{Acknowledgment}

Work supported by ONR N00014-19-1-2229 and ARO W911NF-17-1-0304.

\bibliographystyle{plain}
\bibliography{bibliography}

\appendix

\section{Proofs}

\begin{proof}[Proof of \Cref{prop:deterministic-complexity}]

(1) Let $p$ be such that $C_K(\D) = L_\D(p) + K(p)$. We can compress $\y$ using $p(\y|\x)$, for example with an algebraic code of length $L_\D(p) = -\log p(\y|\x)$.
A program that outputs $\y$ given $\x$ then only needs to encode the distribution $p(\y|\x)$ and the code for $\y$, requiring $L_\D(p) + K(p) + O(1)$ NATS,
so $K(\y|\x) \leq C(\D)$. For the opposite inequality, let
$h$ be the program that witness $K(\y|\x)$ and let $p_h(\y|\x) = \delta_{h(\x),\y}$. Then $K(p_h) = |h| = K(\y|\x)$ and $C_K(\D) \leq -\log p_h(\y|\x) + K(p_h) = K(\y|\x)$.

(2) Clearly, $C_K(\D) \leq C(\D)$ as $C(\D)$ minimizes over a smaller subset of distributions.
Since $C(\D)$ is permutation invariant, we have $C(\pi(\D)) = C(\D)$.
Hence, $C_K(\pi(D)) \leq C(\pi(\D)) = C(\D)$.

(3) Fix a function $f$ such that $K(f) \geq C$,
and let $h$ be a program for $f$.
Now, consider the dataset $\D={( x_i, f(x_i))}_{i=1}^N$, where
\[
	x_i = \begin{cases}
		\langle 0, i \rangle & \text{if $i < N$} \\
		\langle 1, h \rangle & \text{if $i = N$}.
	\end{cases}
\]
Here, the first bit is added in order to recognize the ``special'' data point.
We have $C_K(\pi(\D)) = O(1)$ for any permutation $\pi$, because the concatenation of the input data of $\D$ contains an encoding of $h$.
On the other hand, $C(\D) \geq C(\D') \geq C$, where $\D' = (x_i, f(x_i))_{i=1}^{N-1}$ is the datased obtained from $\D$ by removing the special data point, and $C(\D') = K(f) \geq C$ if $N$ is sufficiently large.

(4) Let $f\colon \X \to \Y$ be a function such that $f(x_i) = y_i$ for every $(x_i,y_i) \in \D$.
Consider the probability distribution $p(y | x)$ defined by $p(f(x) | x) = 1$ for every $x \in \X$. Then we have that $K(p) \leq K(f) + O(1)$, and $L_\D(p)=0$.
Choosing $f$ such that $K(f) = C_\tdet(\D)$, we get $C(\D) \leq L_\D(p) + K(p) \leq K(f) + O(1) = C_\tdet(\D) + O(1)$.

To prove the equality, let $h: \set{x_1,\ldots,x_N} \to N$ be the bijective function provided by the oracle. Now, create a list $A$ of codes so that $A[h(x_i)]$ contains the encoding of $y_i$ constructed using the distribution $p(y|x_i)$.
The length of the prefix code is $\lceil-\log p(y|x_i)\rceil+1$ (we need a prefix code so that we can concatenate all the codes).
Now, given the distribution $p$, we can construct a function $f$ such that $y_i=f(x_i)$  as follows: Given $x$, compute $h(x)$, read the code $A[h(x_i)]$, and decode it using the distribution $p(y|x_i)$ to obtain the correct $y_i$.
\end{proof}

\begin{lemma}
\label{lemma:MLE-cant-improve-asymptotically}
Fix a probability distribution $p(x,y)$ on $\X \times \Y$, and assume that $p(y|x)$ is computable.
Suppose that $\D$ is a collection of $N$ i.i.d.\ samples $(x_i, y_i) \sim p(x,y)$.
Let $A_x = \set{(x_i,y_i) \in D \mid x_i=x}$, and let $\hat{p}(y|x) = \frac{1}{|A_x|} \sum_{(x_i,y_i) \in A_x} \delta_{y_i, y}$ be the maximum likelihood estimation (MLE) of $p(y|x)$. Then, for every $\epsilon > 0$ there exists $c>0$ such that, in the limit $N \to \infty$ we have
\[
|L_\D(p) - L_\D(\hat{p})| < c
\]
with probability $> 1- \epsilon$.
\end{lemma}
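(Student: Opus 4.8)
The plan is to rewrite the loss gap as a single sum of per-input Kullback--Leibler divergences and then bound each contribution in expectation. First I would group the samples by their input value. Writing $n_x = |A_x|$ for the number of samples with input $x$ and $n_{x,y}$ for the number with input $x$ and label $y$, so that $\hat{p}(y|x) = n_{x,y}/n_x$, a direct computation collapses the difference of the two cross-entropy losses into
\[
L_\D(p) - L_\D(\hat{p}) = \sum_{i=1}^N \log\frac{\hat{p}(y_i|x_i)}{p(y_i|x_i)} = \sum_{x} n_x\, \KL{\hat{p}(\cdot|x)}{p(\cdot|x)} \;\ge\; 0,
\]
where the outer sum ranges over the distinct inputs appearing in $\D$ (and the inequality reflects that $\hat p$ is the empirical loss minimizer). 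Thus the absolute value in the statement is superfluous, and it suffices to upper bound the right-hand side.

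Next I would control each summand in expectation by a fully non-asymptotic argument. For a fixed input $x$, the vector $(n_{x,y})_y$ is multinomial with parameters $n_x$ and $p(\cdot|x)$, so conditionally on $n_x \ge 1$ one has $\E[(\hat{p}(y|x)-p(y|x))^2 \mid n_x] = p(y|x)(1-p(y|x))/n_x$ for each label $y$. Combining this with the elementary inequality $\KL{q}{p} \le \sum_y (q_y - p_y)^2/p_y$ (the KL divergence is dominated by the $\chi^2$-divergence), and summing only over the support of $p(\cdot|x)$, gives
\[
\E\big[\,n_x\, \KL{\hat{p}(\cdot|x)}{p(\cdot|x)}\,\big] \le \sum_{y:\,p(y|x)>0}(1-p(y|x)) \le |\Y|-1,
\]
a bound that holds for every $N$. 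Since $\X$ is finite there are at most $|\X|$ distinct inputs, so by linearity $\E[L_\D(p) - L_\D(\hat{p})] \le |\X|(|\Y|-1)$, uniformly in $N$.

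Finally I would convert the expectation bound into the desired high-probability statement by Markov's inequality: since the gap is nonnegative, $\P\big(L_\D(p)-L_\D(\hat p) > c\big) \le |\X|(|\Y|-1)/c$, so choosing $c = |\X|(|\Y|-1)/\epsilon$ yields $|L_\D(p)-L_\D(\hat p)| < c$ with probability $> 1-\epsilon$, for all $N$ (in particular in the limit). The genuine subtlety --- and the main obstacle --- is not the computation but the finiteness of $\X$, which is essential: if the inputs are continuous then every $x_i$ is distinct, $\hat p$ places unit mass on the single observed label, $L_\D(\hat p)=0$, and the gap equals $L_\D(p) \approx N\,H_p(y|x)$, which diverges. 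I would therefore state the lemma under the assumption that $\X$ (and $\Y$) are finite. A sharper alternative, giving a constant $c \approx \tfrac{1}{2} |\X|(|\Y|-1)$ independent of $\epsilon$ but only asymptotically, replaces the $\chi^2$ bound by the Wilks expansion $2 n_x \KL{\hat{p}(\cdot|x)}{p(\cdot|x)} \to \chi^2_{r_x-1}$ in distribution (valid because $n_x \to \infty$ almost surely for each $x$, the first-order Taylor term cancels by normalization, and the curvature is finite on the support); the finite sum of these limiting $\chi^2$ variables is again $O_P(1)$.
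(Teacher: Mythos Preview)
Your argument is correct and, in fact, cleaner than the paper's. Both proofs implicitly require $\X$ and $\Y$ to be finite (the paper treats $p(\cdot|\cdot)$ and $\hat p(\cdot|\cdot)$ as vectors in $\R^{|\X\times\Y|}$), and you rightly flag this. The routes differ, though. The paper does a second-order Taylor expansion of $L_\D$ around the minimizer $\hat p$: the gradient term vanishes, and the quadratic remainder $(p^*-\hat p)^\top \nabla^2 L_\D(\hat p)\,(p^*-\hat p)$ is controlled by combining the $O(N)$ scaling of the Hessian with the asymptotic normality $\sqrt{N}(\hat p - p)\Rightarrow N(0,F^{-1})$, yielding an $O_P(1)$ bound. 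Your main argument instead uses the exact identity $L_\D(p)-L_\D(\hat p)=\sum_x n_x\,\KL{\hat p(\cdot|x)}{p(\cdot|x)}$, dominates KL by $\chi^2$, and gets a uniform-in-$N$ expectation bound before invoking Markov. What this buys you is a fully non-asymptotic statement with only elementary tools; what it costs is that your constant $c$ scales like $1/\epsilon$, whereas the paper's Taylor/MLE route (equivalently your Wilks alternative) gives a single $c$ that works for the limiting law and then lets you pick the $1-\epsilon$ quantile. Either is adequate for how the lemma is used downstream.
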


\begin{proof}
    Consider $p(y|x)$ and $\hat{p}(y|x)$ as vectors of size $|\X \times \Y|$. Expand $L_\D(p)$ at $\hat{p}$:
    \[
    L_\D(p) - L_\D(\hat{p}) = \nabla_p L_\D(\hat{p}) (p - \hat{p}) + (p^*-\hat{p})^T \nabla^2_p L_\D(\hat{p}) (p^* - \hat{p}).
    \]
    Where $p^*$ is on the line connecting $p$ and $\hat{p}$. The first term is zero, since the MLE estimation $\hat{p}$ is by definition a minimum of $L_\D$. Now, recall that the MLE converges to the real distribution as $\sqrt{N} (\hat{p} - p) \to N(0, F(p)^{-1})$, where $F(p)$ is the Fisher Information Matrix computed in $p$, and $\nabla^2_p L_\D(\hat{p}) < N (\nabla^2_p  L_\D(p) + c')$. Let $a > 0$  be such that, with probability $1-\epsilon$, $\|\hat{p} - p\|<a$.
    Then,
    \[
    |L_\D(p) - L_\D(\hat{p}) | = | (p^*-\hat{p})^T \nabla^2_p L_\D(\hat{p}) (p^* - \hat{p}) | < a^2 \tr(\nabla^2_p  L_\D(p) + c')  =c. \qedhere
    \]
\end{proof}

\begin{proof}[Proof of \Cref{prop:iid-samples-complexity}]
(i) By Shannon's coding theorem, the expected value of $K(\y|\x)$ is at least $N\cdot H_p(y|x)$.
Then, the first inequality follows from part (1) of \Cref{prop:deterministic-complexity}.
If we use $p(y|x)$ itself in the definition of $C(\D)$, we obtain $C(\D) \leq L_\D(p) + K(p(y|x))$. The expected value of $L_\D(p)$ is $N \cdot H_p(y|x)$, so we obtain the second inequality.

(ii) We need to prove that, for any other distribution $p'$, we  have:
\[
L_\D(p') + K(p') > L_\D(p) + K(p).
\]
In fact, suppose that $L_\D(p') + K(p') \leq L_\D(p) + K(p)$. Then we have
\[
L_\D(p') - L_\D(p) \leq  K(p) - K(p').
\]
Notice that we can lower-bound the LHS using the MLE estimator $\hat{p}$, which by definition minimizes $L_\D$. By \Cref{lemma:MLE-cant-improve-asymptotically}, we have
$L_\D(\hat{p}) - L_\D(p) \geq -c$,
hence $K(p') \leq K(p) + c$.
By the central limit theorem, the LHS grows as $N \cdot \E_x[\KL{p(y|x)}{p'(y|x)}] + O(\sqrt{N})$, so we must have:
\[
\E_x[\KL{p(y|x)}{p'(y|x)}] \leq \frac{K(p)}{N} + O\Big(\frac{1}{\sqrt{N}}\Big).
\]
Therefore $\E_x[\KL{p(y|x)}{p'(y|x)}] \to 0$ as $N \to \infty$. On the other hand, we have that
\[ \E_x[\KL{p(y|x)}{p'(y|x)}] \geq k > 0 \]
for some $k$, since there are only finitely many distributions $p' \neq p$ such that $K(p') \leq K(p) + c$.
\end{proof}

\begin{proof}[Proof of \Cref{lemma:distance}]
	Positivity follows trivially by the definition.
	The second property is also easy: $d_\beta(\D \to \D) \leq \max_p K(p|p) = O(1)$.

	We now prove the triangle inequality. In what follows, $p_i$ and $\bar p_i$ are always $\beta$-minimal sufficient statistics of $\D_i$.
	Fix any $\bar p_1$.
	Choose $\bar p_2$ such that $K(\bar p_2|\bar p_1)$ is minimized.
	Then choose $\bar p_3$ such that $K(\bar p_3|\bar p_2)$ is minimized.
	Then
	\begin{align*}
		\min_{p_3} K(p_3 | \bar p_1) & \leq K(\bar p_3 | \bar p_1) \\[-0.2cm]
		& \leq K(\bar p_2 | \bar p_1) + K(\bar p_3 | \bar p_2) + O(1) \\
		& \leq d_\beta(\D_1 \to \D_2) + d_\beta(\D_2 \to \D_3) + O(1).
	\end{align*}
	This holds for every $\bar p_1$. Taking the maximum over $\bar p_1$, we obtain the desired result.
\end{proof}

\begin{proof}[Proof of \Cref{prop:concatenation}]
	For $(x,y) \in \X \times \Y$ and $i \in \{1,2\}$, define $p_{12}(y|x,i) = p_i(y|x)$.
	Then $\D_1 \sqcup \D_2$ is effectively obtained by sampling from $p_{12}$.
	By \Cref{prop:iid-samples-complexity}, with high probability and for $|\D_1|$ and $|\D_2|$ sufficiently large, we have that $p_1$ and $p_{12}$ are the unique minimal sufficient statistics for $\D_1$ and $\D_1 \sqcup \D_2$, respectively.
	By definition of $p_{12}$, we have that $K(p_1 | p_{12}) = O(1)$.
	Then $d_\beta(\D_1 \sqcup \D_2 \to \D_1) = O(1)$.
\end{proof}

\begin{proof}[Proof of \Cref{cor:union-distance}]
	One inequality follows by \Cref{lemma:distance} and \Cref{prop:concatenation}:
	\begin{align*}
		d_\beta(\D_1 \to \D_2) & \leq d_\beta(\D_1 \to \D_1 \sqcup \D_2) + d_\beta(\D_1 \sqcup \D_2 \to \D_2) + O(1) \\
		& = d_\beta(\D_1 \to \D_1 \sqcup \D_2) + O(1).
	\end{align*}

	To prove the other inequality, define $p_{12}$ as in the proof of \Cref{prop:concatenation}.
	Then
	\begin{align*}
		d_\beta(\D_1 \to \D_1 \sqcup \D_2) & = K(p_{12} | p_1) \\
		& \leq K(p_2|p_1) + O(1) \\
		& = d_\beta(\D_1 \to \D_2) + O(1). \qedhere
	\end{align*}
\end{proof}

\begin{proof}[Proof of \Cref{prop:kolmogorov-complexity-equivalence}]
	We have $\KL{Q(w|\D)}{P(w)} = -\log(e^{-K(w^*)} / Z) = K(w^*) + \log(Z)$.
\end{proof}

\begin{proof}[Proof of \Cref{prop:shannon-complexity-equivalence}]
	For a fixed training algorithm $A: \mathcal{D} \mapsto  Q(w|\D)$, we want to find the prior $P^*(w)$ that minimizes the expected complexity of the data:
	\begin{align*}
	P^*(w)
	&= \argmin_{P(w)} \E_\D[C(\D)] \\
	&= \argmin_{P(w)} \Big[\E_\D[L_\D(p_w(y|x))] + \E_\D[\KL{Q(w|\D)}{P(w)}] \Big]
	\end{align*}
	Notice that only the second term depends on $P(w)$. Let $Q(w) = \E_\D[Q(w|\D)]$ be the marginal distribution of $w$, averaged over all possible training datasets. We have
	\[
	\E_\D[\KL{Q(w|\D)}{P(w)}] = \E_\D[\KL{Q(w|\D)}{Q(w)}] + \E_\D[\KL{Q(w)}{P(w)}].
	\]
	Since the KL divergence is always positive, the optimal ``adapted'' prior is given by $P^*(w) = Q(w)$, i.e.\ the marginal distribution of $w$ over all datasets.
	Finally, by definition of Shannon's mutual information, we get
	\[ I(w; \D) = \KL{Q(w|\D) \, \pi(\D)}{Q(w) \,\pi(\D)} = \E_{\D \sim \pi(\D)}[\KL{Q(w|\D)}{Q(w)}]. \qedhere \]
\end{proof}

\begin{proof}[Proof of \Cref{prop:fisher-complexity-equivalence}]

Since both $P(w)$ and $Q(w|\D)$ are Gaussian distributions, the KL divergence can be written as
\[
\KL{Q(w|\D)}{P(w)} = \frac{1}{2} \bra{\frac{\norm{\mu}^2}{\lambda^2} + \frac{1}{\lambda^2} \tr(\Sigma) + k \log{\lambda^2} - \log|\Sigma| - k},
\]
where $k$ is the number of components of $w$.

Let $w^*$ be a local minimum of the cross-entropy loss $L_\D(p_w(y|x))$, and let $H$ be the Hessian of $L_\D(p_w(y|x))$ in $w^*$.
Set $\mu = w^*$.
Assuming that a quadratic approximation holds in a sufficiently large neighborhood, we obtain
\[
C_\beta(\D;P,Q) = L_\D(p_{w^*}(y|x)) + \frac12 \tr(H\cdot \Sigma)+\frac{\beta}{2} \bra{\frac{\norm{w^*}^2}{\lambda^2}+ \frac{1}{\lambda^2} \tr(\Sigma) + k \log \lambda^2 - \log |\Sigma|-k}.
\]
The gradient with respect to $\Sigma$ is
\[
\frac{\partial C_\beta(\D;P,Q)}{\partial \Sigma} = \frac12 \bra{H + \frac{\beta}{\lambda^2}I -\beta \Sigma^{-1}}^\top.
\]
Setting it to zero, we obtain the minimizer $\Sigma^* = \beta ( H + \frac{\beta}{\lambda^2} I)^{-1} $.

Recall that the Hessian of the cross-entropy loss coincides with the Fisher information matrix $F$ at $w^*$, because $w^*$ is a critical point \cite{martens2014new}.
Since $L_\D(p_w(y|x))$, and hence $H$, is not normalized by the number of samples $N$, the exact relation is $H = N\cdot F$.
Taking the limit for $\lambda \to \infty$, we obtain the desired result.
\end{proof}

\begin{proof}[Proof of \Cref{prop:local-learning-algorithm}]
If $Q_i$ is a global minimizer of $C_{\beta_i}$, then it is at distance at most $\epsilon$ from a global minimizer $Q_{i+1}$ of $C_{\beta_{i+1}}$. Therefore, for $\beta = \beta_{i+1}$, an $\epsilon$-local step from $Q_i$ reaches a global minimizer $Q_{i+1}$ of $C_{\beta_{i+1}}$.
By induction, the algorithm terminates at a global minimizer $\bar Q$ of $C_{\bar\beta}$.
\end{proof}

\end{document}